\newtheorem{theorem}{Theorem}
\newtheorem{proposition}{Proposition}
\newtheorem{corollary}{Corollary}
\newtheorem{definition}{Definition}
\newtheorem{remark}{Remark}
\begin{document}

\title{One-Shot Federated Ridge Regression: Exact Recovery via Sufficient Statistic Aggregation}

\author{Zahir Alsulaimawi,~\IEEEmembership{Member,~IEEE}
\thanks{Manuscript received XX; revised XX.}}

\maketitle

\begin{abstract}
Federated learning protocols require repeated synchronization between clients and a central server, with convergence rates depending on learning rates, data heterogeneity, and client sampling. This paper asks whether iterative communication is necessary for distributed linear regression. We show it is not. We formulate federated ridge regression as a distributed equilibrium problem where each client computes local sufficient statistics---the Gram matrix and moment vector---and transmits them once. The server reconstructs the global solution through a single matrix inversion. We prove exact recovery: under a coverage condition on client feature matrices, one-shot aggregation yields the centralized ridge solution, not an approximation. For heterogeneous distributions violating coverage, we derive non-asymptotic error bounds depending on spectral properties of the aggregated Gram matrix. Communication reduces from $\mathcal{O}(Rd)$ in iterative methods to $\mathcal{O}(d^2)$ total; for high-dimensional settings, we propose and experimentally validate random projection techniques reducing this to $\mathcal{O}(m^2)$ where $m \ll d$. We establish differential privacy guarantees where noise is injected once per client, eliminating the composition penalty that degrades privacy in multi-round protocols. We further address practical considerations including client dropout robustness, federated cross-validation for hyperparameter selection, and comparison with gradient-based alternatives. Comprehensive experiments on synthetic heterogeneous regression demonstrate that one-shot fusion matches FedAvg accuracy while requiring up to $38\times$ less communication. The framework applies to kernel methods and random feature models but not to general nonlinear architectures.
\end{abstract}

\begin{IEEEkeywords}
Federated learning, ridge regression, one-shot aggregation, sufficient statistics, differential privacy, communication efficiency, distributed optimization.
\end{IEEEkeywords}

\IEEEpeerreviewmaketitle

\section{Introduction}
\label{sec:introduction}

\IEEEPARstart{F}{ederated} learning enables collaborative model training across decentralized data sources without requiring raw data centralization~\cite{ref1}. In settings ranging from mobile devices to healthcare institutions, data remains distributed due to privacy regulations, bandwidth constraints, or competitive considerations. The canonical protocol, Federated Averaging (FedAvg), proceeds iteratively: participating clients perform local stochastic gradient descent on private data, transmit model updates to a central server, and receive an aggregated global model for subsequent rounds. Practical deployments require hundreds to thousands of such rounds, each demanding coordination of client availability, communication bandwidth, and synchronization timing~\cite{ref2}.

This iterative structure introduces fundamental limitations. Communication cost scales linearly with the number of rounds $R$, creating bottlenecks in bandwidth-constrained cross-device settings where millions of edge devices participate~\cite{ref3}. Differential privacy guarantees degrade under composition, with cumulative privacy loss $\varepsilon$ growing as $\mathcal{O}(\sqrt{R})$ under standard accounting methods~\cite{ref4}. Convergence depends sensitively on hyperparameters---learning rate, local epoch count, client sampling fraction---requiring problem-specific tuning that may be infeasible without centralized validation data. When data distributions vary across clients (the non-IID setting prevalent in applications), FedAvg exhibits client drift: local updates move toward client-specific optima that diverge from the global optimum, causing slow convergence or outright divergence without algorithmic modifications~\cite{ref5},~\cite{ref6}.

A natural question arises: \emph{is iterative communication fundamentally necessary?} For objective functions admitting closed-form solutions, gradient-based optimization is not the only computational path available. Ridge regression, equivalently $\ell_2$-regularized least squares, satisfies this criterion~\cite{ref7}. The optimal solution depends only on aggregated second-order statistics of the data---specifically, the Gram matrix $\mathbf{A}^\top\mathbf{A}$ and the moment vector $\mathbf{A}^\top\mathbf{b}$. Crucially, these sufficient statistics decompose additively across data partitions: if each client computes local statistics and transmits them once to the server, the global solution can be reconstructed exactly without iteration.

This observation, while elementary from a statistical perspective, has not been systematically exploited in federated learning. Existing one-shot methods either employ heuristic averaging of locally trained models with significant accuracy degradation~\cite{ref8} or assume homogeneous data distributions where the distributed problem simplifies to the centralized case~\cite{ref9}. Rigorous analysis of exact recovery conditions, heterogeneity-induced error bounds, and privacy properties of sufficient statistic transmission remains limited. The fundamental question of \emph{when} one-shot aggregation is viable---and what advantages it provides over iterative alternatives---has not been definitively answered.

\subsection{Contributions}

This paper develops a rigorous framework for one-shot federated linear regression. Our contributions are:

\begin{enumerate}
    \item \textbf{Distributed $\sigma$-equilibrium formulation.} We formulate federated ridge regression as a distributed equilibrium problem parameterized by regularization strength $\sigma > 0$. Each client computes local sufficient statistics; the server fuses these in a single aggregation step to obtain the global $\sigma$-regularized solution (Section~\ref{sec:formulation}).
    
    \item \textbf{Exact recovery guarantee.} We prove that one-shot fusion recovers the centralized solution exactly---not approximately---for any data distribution and any client partition (Theorem~\ref{thm:exact}). This is the strongest possible guarantee: the federated and centralized solutions coincide to numerical precision.
    
    \item \textbf{Differential privacy analysis.} We establish that adding calibrated Gaussian noise to transmitted statistics achieves $(\varepsilon,\delta)$-differential privacy with noise magnitude depending only on data sensitivity, not round count (Theorem~\ref{thm:privacy}). At moderate privacy budgets ($\varepsilon \geq 1$), this improves privacy-utility tradeoff relative to iterative methods.
    
    \item \textbf{Practical extensions with experimental validation.} We address key deployment concerns including high-dimensional scaling via random projections with explicit accuracy-communication trade-off experiments, robustness to client dropout, federated cross-validation for hyperparameter selection, and comparison with gradient-based alternatives.
    
    \item \textbf{Comprehensive empirical validation.} Experiments on synthetic heterogeneous partitions demonstrate that one-shot $\sigma$-fusion matches or exceeds iterative baseline accuracy while requiring a single communication round, with communication savings of up to $38\times$ (Section~\ref{sec:experiments}).
\end{enumerate}

\subsection{Scope and Limitations}

We restrict attention to linear models with quadratic loss and $\ell_2$ regularization. This excludes deep neural networks trained end-to-end, for which closed-form solutions do not exist. The restriction is deliberate: our goal is to characterize precisely when one-shot aggregation is viable, not to claim universal applicability.

The linear setting nonetheless encompasses substantial practical scope: kernel methods via random Fourier features~\cite{ref10}, neural networks in the kernel regime where the Neural Tangent Kernel governs training dynamics~\cite{ref11}, and applications involving tabular data, generalized linear models, and interpretable predictors. Healthcare, finance, and scientific applications often favor interpretable models where our framework applies directly.

\subsection{Paper Organization}

Section~\ref{sec:related} reviews related work. Section~\ref{sec:formulation} formalizes the problem and derives the one-shot protocol. Section~\ref{sec:proposed} presents theoretical guarantees and practical extensions. Section~\ref{sec:experiments} provides comprehensive experimental evaluation including high-dimensional scaling analysis. Section~\ref{sec:discussion} discusses limitations and future directions. Section~\ref{sec:conclusion} concludes.

\section{Related Work}
\label{sec:related}

\subsection{Iterative Federated Optimization}

FedAvg~\cite{ref1} established the iterative paradigm for federated learning, demonstrating practical success on deep networks despite limited theoretical understanding. Subsequent work addressed convergence under heterogeneity through variance reduction (SCAFFOLD~\cite{ref5}), proximal regularization (FedProx~\cite{ref6}), and momentum-based corrections~\cite{ref12}. These methods reduce rounds required for convergence but retain iterative communication structure. Communication-efficient variants employ gradient compression~\cite{ref13}, sparsification~\cite{ref14}, or quantization, trading computation for bandwidth while remaining fundamentally iterative.

\subsection{One-Shot and Few-Round Methods}

Guha et al.~\cite{ref8} proposed one-shot FL using knowledge distillation: clients train local models, and the server aggregates via ensemble distillation on auxiliary public data. This approach is heuristic, requires public datasets, and provides no recovery guarantees. For linear models, Salehkaleybar et al.~\cite{ref9} analyzed one-shot averaging under homogeneous (IID) distributions where the problem simplifies. Our work differs fundamentally: we prove exact recovery under arbitrary heterogeneous distributions, characterize when one-shot succeeds, and analyze privacy properties.

\subsection{Distributed Sufficient Statistics}

The use of sufficient statistics for distributed estimation has classical roots in statistics~\cite{ref15}. In machine learning, transmitting Gram matrices rather than gradients has been noted for kernel methods~\cite{ref16} but not systematically analyzed for federated privacy and heterogeneity. Our formulation makes explicit the algebraic structure enabling exact one-shot recovery and provides the first comprehensive analysis of privacy, dropout robustness, and hyperparameter selection in this framework.

\subsection{Differential Privacy in Federated Learning}

Private federated learning typically applies local differential privacy to gradient updates~\cite{ref17} or central DP with secure aggregation~\cite{ref18}. Privacy loss under iterative composition is tracked via the moments accountant~\cite{ref4} or R\'{e}nyi DP~\cite{ref19}. The cumulative cost motivates limiting rounds or amplifying privacy via subsampling~\cite{ref20}. Our protocol avoids composition entirely through single-round communication, improving the privacy-utility frontier at moderate privacy budgets.

\section{Problem Formulation}
\label{sec:formulation}

This section establishes mathematical foundations for one-shot federated learning, deriving the closed-form solution and demonstrating its decomposition across distributed clients.

\subsection{Notation}

Matrices are denoted by bold uppercase letters ($\mathbf{A}$), vectors by bold lowercase ($\mathbf{w}$), and scalars by unbolded symbols ($\sigma$). The transpose of matrix $\mathbf{A}$ is $\mathbf{A}^\top$. The identity matrix is $\mathbf{I}$. For vector $\mathbf{v}$, $\|\mathbf{v}\|_2$ denotes Euclidean norm; for matrix $\mathbf{M}$, $\|\mathbf{M}\|_F$ denotes Frobenius norm. Eigenvalues are $\lambda_{\min}(\mathbf{M})$ and $\lambda_{\max}(\mathbf{M})$.

\subsection{Centralized Learning Objective}

Consider supervised learning with $n$ samples and $d$ features. Let $\mathbf{A} \in \mathbb{R}^{n \times d}$ denote the feature matrix and $\mathbf{b} \in \mathbb{R}^{n}$ the target vector. The regularized least squares objective is:
\begin{equation}
L_\sigma(\mathbf{w}) = \| \mathbf{A}\mathbf{w} - \mathbf{b} \|_2^2 + \sigma \| \mathbf{w} \|_2^2,
\label{eq:regularized_loss}
\end{equation}
where $\sigma > 0$ is the regularization parameter (ridge regression or Tikhonov regularization~\cite{ref7}).

\subsection{Closed-Form Solution}

Setting the gradient to zero:
\begin{equation}
\frac{\partial L_\sigma}{\partial \mathbf{w}} = 2\mathbf{A}^\top\mathbf{A}\mathbf{w} - 2\mathbf{A}^\top\mathbf{b} + 2\sigma\mathbf{w} = \mathbf{0},
\end{equation}
yields the ridge regression solution:
\begin{equation}
\mathbf{w}_\sigma = (\mathbf{A}^\top \mathbf{A} + \sigma \mathbf{I})^{-1} \mathbf{A}^\top \mathbf{b}.
\label{eq:ridge_solution}
\end{equation}

\begin{proposition}[Well-Posedness]
\label{prop:well_posed}
For any $\sigma > 0$, $(\mathbf{A}^\top \mathbf{A} + \sigma \mathbf{I})$ is positive definite and invertible.
\end{proposition}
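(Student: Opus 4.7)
The plan is to establish positive definiteness directly from the definition, via a quadratic form argument, and then invoke the elementary fact that positive definite matrices are invertible. This is a routine algebraic verification rather than a deep result, so the proof should be short and the writing should emphasize clarity over novelty.

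First, I would fix an arbitrary nonzero vector $\mathbf{x} \in \mathbb{R}^d$ and compute the quadratic form associated with $(\mathbf{A}^\top \mathbf{A} + \sigma \mathbf{I})$. Using linearity of the inner product and the identity $\mathbf{x}^\top \mathbf{A}^\top \mathbf{A} \mathbf{x} = \|\mathbf{A}\mathbf{x}\|_2^2$, one obtains
\begin{equation*}
\mathbf{x}^\top (\mathbf{A}^\top \mathbf{A} + \sigma \mathbf{I}) \mathbf{x} = \|\mathbf{A}\mathbf{x}\|_2^2 + \sigma \|\mathbf{x}\|_2^2.
\end{equation*}
The first term is nonnegative because it is a squared Euclidean norm, while the second term is strictly positive since $\sigma > 0$ and $\mathbf{x} \neq \mathbf{0}$. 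Consequently the sum is strictly positive for every nonzero $\mathbf{x}$, which is precisely the definition of positive definiteness.

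Next I would conclude invertibility as an immediate corollary. Since $(\mathbf{A}^\top \mathbf{A} + \sigma \mathbf{I})$ is symmetric, the spectral theorem applies, and positive definiteness implies that every eigenvalue satisfies $\lambda \geq \sigma > 0$. In particular $\lambda_{\min}(\mathbf{A}^\top \mathbf{A} + \sigma \mathbf{I}) \geq \sigma$, so the determinant is nonzero and the matrix is invertible. I would also note that this lower bound on the smallest eigenvalue gives the operator-norm estimate $\|(\mathbf{A}^\top \mathbf{A} + \sigma \mathbf{I})^{-1}\|_2 \leq 1/\sigma$, which is worth recording because it will likely be reused in later stability and perturbation bounds in Section~\ref{sec:proposed}.

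There is essentially no main obstacle here: the statement is a textbook fact and the only subtlety worth flagging is the need for strict positivity of $\sigma$ (the claim fails at $\sigma = 0$ whenever $\mathbf{A}$ is rank-deficient, since then $\mathbf{A}^\top \mathbf{A}$ is only positive semidefinite). I would therefore keep the proof to three or four lines in the final manuscript and reserve the detailed eigenvalue bookkeeping for the later heterogeneity-error analysis, where the constant $\lambda_{\min}$ controls the conditioning of the aggregated system.
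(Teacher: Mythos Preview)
Your proof is correct. The paper's own proof takes a slightly different but equally standard route: it argues spectrally, observing that $\mathbf{A}^\top\mathbf{A}$ is positive semidefinite with eigenvalues $\lambda_i \geq 0$, so adding $\sigma\mathbf{I}$ shifts every eigenvalue to $\lambda_i + \sigma > 0$, which gives positive definiteness and hence invertibility in one line. Your approach instead works directly from the quadratic-form definition, writing $\mathbf{x}^\top(\mathbf{A}^\top\mathbf{A}+\sigma\mathbf{I})\mathbf{x} = \|\mathbf{A}\mathbf{x}\|_2^2 + \sigma\|\mathbf{x}\|_2^2 > 0$. Both are textbook arguments of comparable length; yours is marginally more self-contained (no spectral theorem needed for the positive-definiteness step itself), while the paper's eigenvalue phrasing dovetails more directly with the condition-number statement in Corollary~\ref{cor:condition}. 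Your additional remark that $\lambda_{\min} \geq \sigma$ yields $\|(\mathbf{A}^\top\mathbf{A}+\sigma\mathbf{I})^{-1}\|_2 \leq 1/\sigma$ is a useful observation that the paper does not record explicitly at this point, though it is implicit in the later conditioning discussion.
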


\begin{proof}
$\mathbf{A}^\top\mathbf{A}$ is positive semi-definite with eigenvalues $\lambda_i \geq 0$. Adding $\sigma\mathbf{I}$ shifts eigenvalues to $\lambda_i + \sigma > 0$, ensuring positive definiteness.
\end{proof}

\subsection{Sufficient Statistics}

\begin{definition}[Sufficient Statistics]
\label{def:sufficient_stats}
For ridge regression, the sufficient statistics are:
\begin{align}
\mathbf{G} &= \mathbf{A}^\top \mathbf{A} \in \mathbb{R}^{d \times d} \quad \text{(Gram matrix)}, \\
\mathbf{h} &= \mathbf{A}^\top \mathbf{b} \in \mathbb{R}^{d} \quad \text{(moment vector)}.
\end{align}
\end{definition}

The solution depends on data only through these statistics:
\begin{equation}
\mathbf{w}_\sigma = (\mathbf{G} + \sigma \mathbf{I})^{-1} \mathbf{h}.
\label{eq:solution_stats}
\end{equation}

\begin{remark}[Interpretation]
$G_{ij} = \langle \mathbf{A}_{:,i}, \mathbf{A}_{:,j} \rangle$ captures feature covariance; diagonal entries measure variance, off-diagonal entries measure correlation. $h_i = \langle \mathbf{A}_{:,i}, \mathbf{b} \rangle$ captures feature-target correlation.
\end{remark}

\subsection{Federated Setting}

Data is distributed across $K$ clients. Client $k$ holds $(\mathbf{A}_k, \mathbf{b}_k)$ with $n_k$ samples, where $\mathbf{A}_k \in \mathbb{R}^{n_k \times d}$ and $\mathbf{b}_k \in \mathbb{R}^{n_k}$. The global data is:
\begin{equation}
\mathbf{A} = \begin{bmatrix} \mathbf{A}_1 \\ \vdots \\ \mathbf{A}_K \end{bmatrix} \in \mathbb{R}^{n \times d}, \quad
\mathbf{b} = \begin{bmatrix} \mathbf{b}_1 \\ \vdots \\ \mathbf{b}_K \end{bmatrix} \in \mathbb{R}^{n},
\end{equation}
where $n = \sum_{k=1}^{K} n_k$.

\subsection{Additive Decomposition}

\begin{theorem}[Additive Decomposition]
\label{thm:decomposition}
The global sufficient statistics decompose additively across clients:
\begin{equation}
\mathbf{G} = \sum_{k=1}^{K} \mathbf{G}_k, \quad \mathbf{h} = \sum_{k=1}^{K} \mathbf{h}_k,
\label{eq:decomposition}
\end{equation}
where $\mathbf{G}_k = \mathbf{A}_k^\top \mathbf{A}_k$ and $\mathbf{h}_k = \mathbf{A}_k^\top \mathbf{b}_k$.
\end{theorem}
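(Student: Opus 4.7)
The plan is to exploit the block row structure of $\mathbf{A}$ and $\mathbf{b}$ given in the federated setting and apply block matrix multiplication. Concretely, I would start by writing the transpose of the stacked feature matrix as the horizontal concatenation $\mathbf{A}^\top = [\mathbf{A}_1^\top, \mathbf{A}_2^\top, \ldots, \mathbf{A}_K^\top]$, where each block $\mathbf{A}_k^\top \in \mathbb{R}^{d \times n_k}$ and the total inner dimension is $\sum_k n_k = n$. This reshapes the problem so that the global product becomes a product of conformable block matrices.

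Next I would compute the Gram matrix using the row-by-column block rule. Since $\mathbf{A}^\top$ has a single block row of $K$ blocks and $\mathbf{A}$ has $K$ block rows, the product collapses to a single summed block:
\begin{equation}
\mathbf{A}^\top \mathbf{A} = \bigl[\mathbf{A}_1^\top, \ldots, \mathbf{A}_K^\top\bigr]\begin{bmatrix}\mathbf{A}_1 \\ \vdots \\ \mathbf{A}_K\end{bmatrix} = \sum_{k=1}^K \mathbf{A}_k^\top \mathbf{A}_k = \sum_{k=1}^K \mathbf{G}_k.
\end{equation}
I would then verify dimensional consistency: each summand is $d \times d$, matching the global $\mathbf{G}$, and the inner-dimension contractions within each block are precisely the local Gram products. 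An equivalent entrywise argument, using $(\mathbf{A}^\top \mathbf{A})_{ij} = \sum_{\ell=1}^n A_{\ell i} A_{\ell j}$ and partitioning the sum over $\ell$ according to which client the row belongs to, provides the same conclusion and could be mentioned as a sanity check.

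The argument for $\mathbf{h}$ is the direct analogue: with the same block-row partitioning of $\mathbf{b}$,
\begin{equation}
\mathbf{A}^\top \mathbf{b} = \bigl[\mathbf{A}_1^\top, \ldots, \mathbf{A}_K^\top\bigr]\begin{bmatrix}\mathbf{b}_1 \\ \vdots \\ \mathbf{b}_K\end{bmatrix} = \sum_{k=1}^K \mathbf{A}_k^\top \mathbf{b}_k = \sum_{k=1}^K \mathbf{h}_k,
\end{equation}
where each $\mathbf{h}_k \in \mathbb{R}^d$ is well-defined by the stated dimensions of $\mathbf{A}_k$ and $\mathbf{b}_k$.

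There is no real obstacle here: the result is a direct consequence of block matrix multiplication applied to a row-partitioned design. The only thing worth being careful about is explicitly tracking the block dimensions so that the reader sees that the federated partition induces a conformable block decomposition, which is what makes the cross-client terms in the product vanish and leaves only the diagonal block sum. I would keep the proof short and purely algebraic, emphasizing that the additivity is a property of the partition itself rather than of any statistical assumption on the client data distributions.
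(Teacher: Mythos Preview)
Your proposal is correct and follows essentially the same approach as the paper: both write $\mathbf{A}^\top$ as the horizontal concatenation $[\mathbf{A}_1^\top,\ldots,\mathbf{A}_K^\top]$ and apply block matrix multiplication to obtain the sum, with the $\mathbf{h}$ case handled analogously. Your version is more detailed (dimension checks, the entrywise sanity check, the remark that no distributional assumptions are used), but the underlying argument is identical.
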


\begin{proof}
By block matrix multiplication:
\begin{align}
\mathbf{A}^\top \mathbf{A} &= \begin{bmatrix} \mathbf{A}_1^\top & \cdots & \mathbf{A}_K^\top \end{bmatrix} \begin{bmatrix} \mathbf{A}_1 \\ \vdots \\ \mathbf{A}_K \end{bmatrix} = \sum_{k=1}^{K} \mathbf{A}_k^\top \mathbf{A}_k.
\end{align}
Similarly for $\mathbf{h}$.
\end{proof}

\subsection{One-Shot Protocol}

Theorem~\ref{thm:decomposition} enables single-round federated learning:

\begin{algorithm}[t]
\caption{One-Shot $\sigma$-Fusion Protocol}
\label{alg:one_shot}
\begin{algorithmic}[1]
\REQUIRE Clients with data $\{(\mathbf{A}_k, \mathbf{b}_k)\}_{k=1}^{K}$, regularization $\sigma > 0$
\ENSURE Global model $\mathbf{w}_\sigma$

\STATE \textbf{Phase 1: Local Computation} (parallel)
\FOR{each client $k \in \{1, \ldots, K\}$}
    \STATE $\mathbf{G}_k \leftarrow \mathbf{A}_k^\top \mathbf{A}_k$
    \STATE $\mathbf{h}_k \leftarrow \mathbf{A}_k^\top \mathbf{b}_k$
    \STATE Send $(\mathbf{G}_k, \mathbf{h}_k)$ to server
\ENDFOR

\STATE \textbf{Phase 2: Server Aggregation}
\STATE $\mathbf{G} \leftarrow \sum_{k=1}^{K} \mathbf{G}_k$, \quad $\mathbf{h} \leftarrow \sum_{k=1}^{K} \mathbf{h}_k$

\STATE \textbf{Phase 3: Model Computation}
\STATE $\mathbf{w}_\sigma \leftarrow (\mathbf{G} + \sigma\mathbf{I})^{-1}\mathbf{h}$
\STATE Broadcast $\mathbf{w}_\sigma$ to all clients

\RETURN $\mathbf{w}_\sigma$
\end{algorithmic}
\end{algorithm}

\section{Theoretical Guarantees and Practical Extensions}
\label{sec:proposed}

This section establishes theoretical properties of One-Shot $\sigma$-Fusion and addresses practical deployment concerns.

\subsection{Exact Recovery}

\begin{theorem}[Exact Recovery]
\label{thm:exact}
Let $\mathbf{w}_\sigma^{\text{fed}}$ denote the output of Algorithm~\ref{alg:one_shot} and $\mathbf{w}_\sigma^{\text{central}}$ the centralized ridge solution. Then:
\begin{equation}
\mathbf{w}_\sigma^{\text{fed}} = \mathbf{w}_\sigma^{\text{central}}.
\end{equation}
This equality is exact for any data distribution, any $K$, any partition $(n_1, \ldots, n_K)$, and any $\sigma > 0$.
\end{theorem}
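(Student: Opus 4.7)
The plan is to treat this theorem as essentially a corollary of the two results already established: Theorem~\ref{thm:decomposition} (additive decomposition) and Proposition~\ref{prop:well_posed} (well-posedness). All the algebraic work has already been done; what remains is a substitution argument plus a careful check that no side conditions are required.

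First, I would unfold the definition of $\mathbf{w}_\sigma^{\text{fed}}$ from Algorithm~\ref{alg:one_shot}: the server computes
\begin{equation}
\mathbf{w}_\sigma^{\text{fed}} = \Bigl(\sum_{k=1}^{K}\mathbf{G}_k + \sigma\mathbf{I}\Bigr)^{-1}\sum_{k=1}^{K}\mathbf{h}_k,
\end{equation}
where $\mathbf{G}_k=\mathbf{A}_k^\top\mathbf{A}_k$ and $\mathbf{h}_k=\mathbf{A}_k^\top\mathbf{b}_k$. Next, I would invoke Theorem~\ref{thm:decomposition} to replace the two sums by $\mathbf{A}^\top\mathbf{A}$ and $\mathbf{A}^\top\mathbf{b}$ respectively; this is a pointwise equality of matrix entries, not an approximation. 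Then I would cite Proposition~\ref{prop:well_posed} to confirm that, for any $\sigma>0$, the matrix $(\mathbf{A}^\top\mathbf{A}+\sigma\mathbf{I})$ is invertible, so both expressions refer to the same well-defined operator. Comparison with the centralized formula~\eqref{eq:ridge_solution} finishes the argument.

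The main thing to argue carefully, though it is not a technical obstacle, is why the equality is exact and holds with no restriction on the data or the partition. The decomposition identity in Theorem~\ref{thm:decomposition} follows from block matrix multiplication and makes no use of sample sizes, feature distributions, or any coverage condition on the $\mathbf{A}_k$; the aggregated sufficient statistics literally equal the centralized ones. Because the ridge solution depends on $(\mathbf{A},\mathbf{b})$ only through $(\mathbf{G},\mathbf{h})$ via~\eqref{eq:solution_stats}, transmitting sufficient statistics preserves all information the objective uses. In particular, heterogeneity across clients, imbalanced $n_k$, or rank deficiency of individual $\mathbf{G}_k$ are all irrelevant: the sum $\sum_k\mathbf{G}_k$ is the only object that enters the global solution, and the $\sigma\mathbf{I}$ term alone secures invertibility regardless.

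I would close by noting the contrast with averaging-based protocols such as FedAvg: there, client drift introduces genuine approximation error because local iterates are nonlinear functions of local data. Here, sufficient statistics enter the closed form linearly, so distributed computation reduces to summation and the only deviation in practice is finite-precision floating-point error in forming the sums and solving the linear system.
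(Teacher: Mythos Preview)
Your proposal is correct and follows essentially the same approach as the paper's own proof: unfold the algorithmic output, apply Theorem~\ref{thm:decomposition} to identify the aggregated sums with the global sufficient statistics, and match against the centralized formula~\eqref{eq:ridge_solution}. Your version is more careful in explicitly invoking Proposition~\ref{prop:well_posed} for invertibility and in arguing why no side conditions on the partition are needed, but the underlying logic is identical.
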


\begin{proof}
By Theorem~\ref{thm:decomposition}:
\begin{align}
\mathbf{w}_\sigma^{\text{fed}} &= \left(\sum_{k=1}^{K}\mathbf{G}_k + \sigma\mathbf{I}\right)^{-1}\sum_{k=1}^{K}\mathbf{h}_k \\
&= (\mathbf{A}^\top\mathbf{A} + \sigma\mathbf{I})^{-1}\mathbf{A}^\top\mathbf{b} = \mathbf{w}_\sigma^{\text{central}}. \nonumber
\end{align}
\end{proof}

\begin{remark}[Significance]
Unlike iterative methods providing asymptotic convergence guarantees, Theorem~\ref{thm:exact} establishes \emph{exact} equality in finite computation. The federated solution is mathematically identical to the centralized solution---not an approximation that improves with more rounds, but the same answer.
\end{remark}

\subsection{Numerical Stability}

\begin{theorem}[Guaranteed Invertibility]
\label{thm:invertibility}
For any $\sigma > 0$, $(\mathbf{G} + \sigma\mathbf{I})$ is symmetric positive definite and invertible.
\end{theorem}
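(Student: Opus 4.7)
The plan is to exhibit $(\mathbf{G}+\sigma\mathbf{I})$ as the sum of a positive semi-definite matrix and a strictly positive multiple of the identity, then invoke the standard fact that a symmetric matrix with strictly positive eigenvalues is invertible. Because Theorem~\ref{thm:decomposition} already identifies $\mathbf{G}=\sum_k\mathbf{G}_k$ with $\mathbf{A}^\top\mathbf{A}$, the claim is essentially Proposition~\ref{prop:well_posed} restated in federated notation; the proof is therefore short, and the main task is simply to record the two properties (symmetry and strict positivity) cleanly.

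First I would verify symmetry: each local Gram matrix satisfies $\mathbf{G}_k^\top=(\mathbf{A}_k^\top\mathbf{A}_k)^\top=\mathbf{A}_k^\top\mathbf{A}_k=\mathbf{G}_k$, so $\mathbf{G}=\sum_k\mathbf{G}_k$ is symmetric, and adding the symmetric matrix $\sigma\mathbf{I}$ preserves symmetry. Second I would verify positive definiteness by a direct quadratic form argument: for any nonzero $\mathbf{x}\in\mathbb{R}^d$,
\begin{equation}
\mathbf{x}^\top(\mathbf{G}+\sigma\mathbf{I})\mathbf{x}=\sum_{k=1}^{K}\|\mathbf{A}_k\mathbf{x}\|_2^{2}+\sigma\|\mathbf{x}\|_2^{2}\;\geq\;\sigma\|\mathbf{x}\|_2^{2}>0,
\end{equation}
where each local term is nonnegative because it is a squared Euclidean norm. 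This simultaneously shows that every eigenvalue of $\mathbf{G}+\sigma\mathbf{I}$ is bounded below by $\sigma>0$, so the matrix is invertible with $\|(\mathbf{G}+\sigma\mathbf{I})^{-1}\|_2\leq 1/\sigma$.

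The only subtlety, and the one I would flag for the reader, is that strict positivity does \emph{not} require any assumption on the ranks of the individual $\mathbf{A}_k$ (or even on whether the stacked $\mathbf{A}$ has full column rank): the $\sigma\mathbf{I}$ term alone supplies the lower bound on the spectrum, which is exactly what makes the protocol well-defined under arbitrary data heterogeneity and client partitions. I expect no real obstacle; the proof is a two-line consequence of Proposition~\ref{prop:well_posed} and Theorem~\ref{thm:decomposition}, and the value of stating it as a theorem is mainly to make explicit that Algorithm~\ref{alg:one_shot}'s final inversion step never fails regardless of the federation structure.
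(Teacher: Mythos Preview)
Your proposal is correct and mirrors the paper's own reasoning: the paper does not give a separate proof of Theorem~\ref{thm:invertibility} but relies on Proposition~\ref{prop:well_posed}, whose eigenvalue-shift argument ($\lambda_i \geq 0 \Rightarrow \lambda_i + \sigma > 0$) is the spectral restatement of your quadratic-form inequality. Your version is slightly more complete in that it verifies symmetry explicitly and works directly with the federated sum $\sum_k \mathbf{G}_k$, but the underlying idea is identical.
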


\begin{corollary}[Condition Number]
\label{cor:condition}
The condition number satisfies:
\begin{equation}
\kappa(\mathbf{G} + \sigma\mathbf{I}) = \frac{\lambda_{\max}(\mathbf{G}) + \sigma}{\lambda_{\min}(\mathbf{G}) + \sigma} \leq \frac{\lambda_{\max}(\mathbf{G}) + \sigma}{\sigma}.
\end{equation}
The parameter $\sigma$ directly controls numerical conditioning.
\end{corollary}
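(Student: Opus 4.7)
The plan is to reduce the statement to two standard facts about symmetric positive semi-definite matrices: (i) the spectral characterization of the 2-norm condition number and (ii) the shift-invariance of eigenvectors under adding a scalar multiple of $\mathbf{I}$. First I would invoke Theorem~\ref{thm:invertibility}, which already establishes that $\mathbf{G} + \sigma\mathbf{I}$ is SPD, so its condition number (in the spectral norm) is well-defined and equals $\lambda_{\max}(\mathbf{G} + \sigma\mathbf{I}) / \lambda_{\min}(\mathbf{G} + \sigma\mathbf{I})$.

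Next I would write a spectral decomposition $\mathbf{G} = \mathbf{U}\boldsymbol{\Lambda}\mathbf{U}^\top$ with $\boldsymbol{\Lambda} = \operatorname{diag}(\lambda_1, \ldots, \lambda_d)$. Since $\mathbf{I} = \mathbf{U}\mathbf{U}^\top$, one has $\mathbf{G} + \sigma\mathbf{I} = \mathbf{U}(\boldsymbol{\Lambda} + \sigma\mathbf{I})\mathbf{U}^\top$, so the eigenvalues of $\mathbf{G} + \sigma\mathbf{I}$ are exactly $\{\lambda_i + \sigma\}$. Taking the extremes gives $\lambda_{\max}(\mathbf{G} + \sigma\mathbf{I}) = \lambda_{\max}(\mathbf{G}) + \sigma$ and $\lambda_{\min}(\mathbf{G} + \sigma\mathbf{I}) = \lambda_{\min}(\mathbf{G}) + \sigma$, which, substituted into the ratio, yields the claimed equality.

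For the inequality, I would use that $\mathbf{G} = \mathbf{A}^\top\mathbf{A}$ is positive semi-definite, hence $\lambda_{\min}(\mathbf{G}) \geq 0$. Thus $\lambda_{\min}(\mathbf{G}) + \sigma \geq \sigma > 0$, and dividing by the smaller denominator $\sigma$ only enlarges the ratio, producing the upper bound $(\lambda_{\max}(\mathbf{G}) + \sigma)/\sigma$. The bound is tight precisely when $\mathbf{G}$ is rank-deficient, i.e., when $\lambda_{\min}(\mathbf{G}) = 0$, which is exactly the regime where regularization matters most.

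I do not anticipate any real obstacle here: every ingredient is already in place (positive semi-definiteness of $\mathbf{G}$, invertibility via Theorem~\ref{thm:invertibility}, and the standard spectral-norm definition of $\kappa$), and no new inequality needs to be derived. The proof is essentially bookkeeping on eigenvalues, and its value lies in making explicit that $\sigma$ gives an \emph{a priori} floor on $\lambda_{\min}$ independent of the data, so the conditioning of the server-side linear solve in Algorithm~\ref{alg:one_shot} can be controlled purely by choice of the regularization parameter.
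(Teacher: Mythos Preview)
Your argument is correct. The paper states this corollary without proof, so there is nothing to compare against; your spectral-shift argument is exactly the standard justification that the paper implicitly relies on, and every step (SPD from Theorem~\ref{thm:invertibility}, eigenvalue shift $\lambda_i \mapsto \lambda_i + \sigma$, and the bound via $\lambda_{\min}(\mathbf{G}) \geq 0$) is sound.
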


\subsection{Communication Complexity}

\begin{theorem}[Communication Complexity]
\label{thm:communication}
Per-client communication costs are:
\begin{center}
\begin{tabular}{lcc}
\toprule
\textbf{Protocol} & \textbf{Upload} & \textbf{Download} \\
\midrule
One-Shot $\sigma$-Fusion & $\frac{d(d+1)}{2} + d$ & $d$ \\
FedAvg ($R$ rounds) & $Rd$ & $Rd$ \\
\bottomrule
\end{tabular}
\end{center}
\end{theorem}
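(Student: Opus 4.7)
The proof is essentially an accounting argument based on the size (in scalar entries) of the objects transmitted in each protocol, so the plan is to carefully enumerate what each client and the server actually send in Algorithm~\ref{alg:one_shot} versus a standard round of FedAvg, then invoke symmetry where relevant.

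First, for One-Shot $\sigma$-Fusion, I would inspect lines 3--5 of Algorithm~\ref{alg:one_shot}: client $k$ uploads the pair $(\mathbf{G}_k, \mathbf{h}_k)$. The moment vector $\mathbf{h}_k \in \mathbb{R}^d$ contributes $d$ scalars. The Gram matrix $\mathbf{G}_k = \mathbf{A}_k^\top \mathbf{A}_k \in \mathbb{R}^{d \times d}$ is symmetric by construction, so the upper triangular part (including the diagonal) determines it completely; this contributes $d(d+1)/2$ scalars. Summing yields the claimed upload of $d(d+1)/2 + d$. For the download, line 11 broadcasts only the single parameter vector $\mathbf{w}_\sigma \in \mathbb{R}^d$, giving $d$ scalars. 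Because Algorithm~\ref{alg:one_shot} executes Phases 1--3 exactly once, no multiplicative round factor appears.

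Second, for FedAvg, I would recall the standard round structure: in each of $R$ synchronization rounds, the server broadcasts the current parameter vector $\mathbf{w}^{(r)} \in \mathbb{R}^d$ (download cost $d$), the client performs local SGD, and returns either the updated iterate or its delta, again a $d$-dimensional vector (upload cost $d$). Summing over $R$ rounds gives $Rd$ in each direction. I would state explicitly that we count scalars in the same units in both rows, and that dimension $d$ here refers to the model parameter count, which for linear regression equals the feature dimension used for One-Shot $\sigma$-Fusion.

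The argument requires no nontrivial mathematical step; the only place where care is needed is the symmetry reduction of $\mathbf{G}_k$, so I would add a brief remark that without exploiting symmetry the naive upload would be $d^2 + d$ rather than $d(d+1)/2 + d$, while emphasizing that symmetric packing is a standard implementation practice and does not change the asymptotic $\mathcal{O}(d^2)$ scaling. I would close by noting that the theorem deliberately ignores constant-factor overheads (headers, quantization, encoding of $\sigma$) because they do not affect the scaling comparison $\mathcal{O}(d^2)$ versus $\mathcal{O}(Rd)$ that motivates the framework.
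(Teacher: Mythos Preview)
Your proposal is correct and is precisely the counting argument one would expect; the paper itself states Theorem~\ref{thm:communication} without proof, treating the table as immediate from the protocol descriptions, so your enumeration (symmetric packing of $\mathbf{G}_k$, single broadcast of $\mathbf{w}_\sigma$, per-round $d$-vector exchange in FedAvg) fills in exactly the omitted justification.
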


\begin{corollary}[Efficiency Condition]
\label{cor:efficiency}
One-Shot achieves lower total communication when:
\begin{equation}
\frac{d(d+1)}{2} + 2d < 2Rd \quad \Leftrightarrow \quad R > \frac{d+5}{4}.
\end{equation}
For typical $R \in [100, 500]$ and $d \in [10, 1000]$, this holds when $d < 4R$.
\end{corollary}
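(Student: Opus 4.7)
The plan is a direct algebraic reduction from Theorem~\ref{thm:communication}, with no deeper structural argument required. I would first compute the total per-client communication (upload plus download) for each protocol by summing the two columns of the stated table. One-Shot $\sigma$-Fusion contributes $\tfrac{d(d+1)}{2} + 2d$ scalars per client, where the quadratic term reflects transmitting only the upper triangle of the symmetric local Gram matrix $\mathbf{G}_k$. FedAvg contributes $2Rd$ scalars across $R$ rounds of $d$-dimensional vector exchange in each direction.

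Next, I would write the efficiency condition as the inequality $\tfrac{d(d+1)}{2} + 2d < 2Rd$ and isolate $R$ by elementary manipulation: divide both sides by $d > 0$, multiply through by $2$, and collect constants to obtain $d + 5 < 4R$, i.e., $R > \tfrac{d+5}{4}$. Because every step is reversible, the biconditional $\Leftrightarrow$ in the statement follows immediately.

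Finally, for the rule-of-thumb claim that the condition ``holds when $d < 4R$'' in the specified ranges $R \in [100,500]$ and $d \in [10,1000]$, I would rewrite the sharp threshold as $d < 4R - 5$ and note that the additive slack of $5$ is negligible relative to both $d$ and $4R$ in this regime, so $d < 4R$ is the correct design heuristic up to a constant offset.

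The main obstacle is essentially none: the derivation is routine algebra once Theorem~\ref{thm:communication} is invoked. The only point worth flagging explicitly is that the upload cost $\tfrac{d(d+1)}{2} + d$ already exploits symmetry of $\mathbf{G}_k$ (otherwise one would obtain the weaker $d^2 + d$); one should cite this convention so the inequality's starting point is unambiguous, and should likewise make clear that $d < 4R$ is an approximation of the sharp threshold $d < 4R - 5$ rather than an exact equivalence.
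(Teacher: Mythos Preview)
Your proposal is correct and matches the paper's intended derivation: the corollary is stated without proof in the paper, and the direct algebraic reduction you outline from the per-client totals in Theorem~\ref{thm:communication} is exactly the implicit argument. Your remarks on exploiting the symmetry of $\mathbf{G}_k$ and on $d < 4R$ being a heuristic for the sharp threshold $d < 4R - 5$ are appropriate clarifications.
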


\subsection{Heterogeneity Robustness}

\begin{definition}[Feature Coverage]
\label{def:coverage}
The partition satisfies $\alpha$-coverage if $\lambda_{\min}(\mathbf{G}) \geq \alpha > 0$.
\end{definition}

\begin{theorem}[Heterogeneity Invariance]
\label{thm:heterogeneity}
Under $\alpha$-coverage, One-Shot $\sigma$-Fusion achieves exact recovery regardless of data distribution across clients.
\end{theorem}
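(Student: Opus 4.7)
My plan is to reduce the statement to Theorem~\ref{thm:exact}, since exact recovery for arbitrary partitions has already been established there without invoking any distributional assumption. The role of $\alpha$-coverage will only be to strengthen the numerical conditioning side of the conclusion, not the equality itself, so the proof should be short and structured as a specialization rather than a new derivation.

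\textbf{Key steps in order.} First, I would recall that Theorem~\ref{thm:decomposition} gives $\mathbf{G} = \sum_{k=1}^{K} \mathbf{G}_k$ and $\mathbf{h} = \sum_{k=1}^{K} \mathbf{h}_k$ as pure algebraic consequences of block matrix multiplication, with no dependence on how samples within or across the $\mathbf{A}_k$ were drawn. Second, I would substitute these sums into the aggregation step of Algorithm~\ref{alg:one_shot} and invoke Theorem~\ref{thm:exact} to obtain
\begin{equation}
\mathbf{w}_\sigma^{\text{fed}} = (\mathbf{G} + \sigma\mathbf{I})^{-1}\mathbf{h} = \mathbf{w}_\sigma^{\text{central}}.
\end{equation}
Third, I would combine Definition~\ref{def:coverage} with Corollary~\ref{cor:condition} to replace the crude bound $\kappa(\mathbf{G}+\sigma\mathbf{I}) \leq (\lambda_{\max}(\mathbf{G})+\sigma)/\sigma$ by the tighter $(\lambda_{\max}(\mathbf{G})+\sigma)/(\alpha+\sigma)$, certifying that the server-side inversion remains numerically stable uniformly across partition choices.

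\textbf{Main obstacle.} The genuine difficulty here is conceptual rather than technical. Because Theorem~\ref{thm:exact} already provides exact recovery for \emph{any} data distribution and any positive $\sigma$, the coverage hypothesis is not required for the equality $\mathbf{w}_\sigma^{\text{fed}} = \mathbf{w}_\sigma^{\text{central}}$; a careful reader could reasonably ask what distinguishes this theorem from its predecessor. I would address this explicitly in the proof by clarifying that $\alpha$-coverage plays only auxiliary roles: it ensures that the \emph{unregularized} limit $\sigma \to 0^+$ is also well-posed, and it provides quantitative stability against floating-point error in inverting $\mathbf{G} + \sigma\mathbf{I}$. The substantive content of the theorem, that heterogeneity cannot disrupt exact recovery, ultimately reduces to the observation that additivity of second-order statistics is an algebraic identity about stacked matrices, not a statistical property of the underlying client distributions, and therefore no form of covariate, label, or concept shift across clients can violate it.
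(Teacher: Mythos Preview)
Your proposal is correct and takes essentially the same approach as the paper: both reduce directly to Theorem~\ref{thm:exact} and observe that the sufficient statistics $(\mathbf{G},\mathbf{h})$ are invariant to how rows are partitioned across clients. Your additional commentary---that the $\alpha$-coverage hypothesis is not actually used to establish the equality and serves only to quantify conditioning---is accurate and in fact matches the paper's own two-line proof, which likewise never invokes the coverage assumption.
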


\begin{proof}
By Theorem~\ref{thm:exact}, $\mathbf{w}_\sigma^{\text{fed}} = \mathbf{w}_\sigma^{\text{central}}$. The centralized solution depends on $(\mathbf{A}, \mathbf{b})$ only through $(\mathbf{G}, \mathbf{h})$, which are invariant to row partitioning.
\end{proof}

\begin{remark}[Contrast with Iterative Methods]
FedAvg suffers client drift under heterogeneous data~\cite{ref5}: local SGD steps move toward client-specific optima. One-Shot $\sigma$-Fusion is immune because it computes the global optimum directly without iterative local updates.
\end{remark}

\subsection{Privacy-Preserving Extension}

\begin{definition}[Sensitivity]
\label{def:sensitivity}
Assuming $\|\mathbf{a}_i\|_2 \leq 1$ and $|b_i| \leq 1$, the $\ell_2$-sensitivities are:
\begin{equation}
\Delta_{\mathbf{G}} = \max_{\|\mathbf{a}\|_2 \leq 1} \|\mathbf{a}\mathbf{a}^\top\|_F = 1, \quad \Delta_{\mathbf{h}} = 1.
\end{equation}
\end{definition}

\begin{algorithm}[t]
\caption{Private One-Shot $\sigma$-Fusion}
\label{alg:private_fusion}
\begin{algorithmic}[1]
\REQUIRE Privacy parameters $\varepsilon > 0$, $\delta \in (0, 1)$, regularization $\sigma > 0$
\ENSURE $(\varepsilon, \delta)$-DP global model $\tilde{\mathbf{w}}_\sigma$

\STATE $\tau \leftarrow \frac{\sqrt{2\ln(1.25/\delta)}}{\varepsilon}$

\FOR{each client $k$ \textbf{in parallel}}
    \STATE $\mathbf{G}_k \leftarrow \mathbf{A}_k^\top\mathbf{A}_k$, \quad $\mathbf{h}_k \leftarrow \mathbf{A}_k^\top\mathbf{b}_k$
    \STATE $\mathbf{E}_k \sim \mathcal{N}(\mathbf{0}, \tau^2 \mathbf{I}_{d \times d})$, symmetrized
    \STATE $\mathbf{e}_k \sim \mathcal{N}(\mathbf{0}, \tau^2 \mathbf{I}_{d})$
    \STATE Send $(\mathbf{G}_k + \mathbf{E}_k, \mathbf{h}_k + \mathbf{e}_k)$ to server
\ENDFOR

\STATE $\tilde{\mathbf{G}} \leftarrow \sum_{k=1}^{K} (\mathbf{G}_k + \mathbf{E}_k)$, \quad $\tilde{\mathbf{h}} \leftarrow \sum_{k=1}^{K} (\mathbf{h}_k + \mathbf{e}_k)$
\STATE $\tilde{\mathbf{w}}_\sigma \leftarrow (\tilde{\mathbf{G}} + \sigma\mathbf{I})^{-1}\tilde{\mathbf{h}}$

\RETURN $\tilde{\mathbf{w}}_\sigma$
\end{algorithmic}
\end{algorithm}

\begin{theorem}[Privacy Guarantee]
\label{thm:privacy}
Algorithm~\ref{alg:private_fusion} satisfies $(\varepsilon, \delta)$-differential privacy for each client's local dataset.
\end{theorem}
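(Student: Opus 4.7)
The plan is to establish Theorem~\ref{thm:privacy} by a direct application of the Gaussian mechanism on a per-client basis, followed by a post-processing argument that covers the server's subsequent computation. The entire claim is about the view of a single client's local dataset, so I would fix an arbitrary client $k$ and analyze what the server (and any external observer) learns about $(\mathbf{A}_k, \mathbf{b}_k)$ from the transmitted pair $(\mathbf{G}_k + \mathbf{E}_k, \mathbf{h}_k + \mathbf{e}_k)$.

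The first step is the sensitivity calculation. Taking two neighboring local datasets that differ in a single row $(\mathbf{a}, b)$ with $\|\mathbf{a}\|_2 \leq 1$ and $|b| \leq 1$, the change in $\mathbf{G}_k$ is $\pm \mathbf{a}\mathbf{a}^\top$ with $\|\mathbf{a}\mathbf{a}^\top\|_F = \|\mathbf{a}\|_2^2 \leq 1$, and the change in $\mathbf{h}_k$ is $\pm b\mathbf{a}$ with $\|b\mathbf{a}\|_2 \leq 1$. This reproduces Definition~\ref{def:sensitivity} and gives $\Delta_{\mathbf{G}} = \Delta_{\mathbf{h}} = 1$. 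The second step is to invoke the standard Gaussian mechanism: a query of $\ell_2$-sensitivity $\Delta$ released with additive noise $\mathcal{N}(0, \tau^2)$ satisfies $(\varepsilon, \delta)$-DP whenever $\tau \geq \Delta\sqrt{2\ln(1.25/\delta)}/\varepsilon$. With the $\tau$ chosen in Algorithm~\ref{alg:private_fusion}, each of the two noisy releases individually meets this bound. To combine them into a single joint release I would treat the pair as a single vector-valued query with joint sensitivity $\sqrt{\Delta_{\mathbf{G}}^2 + \Delta_{\mathbf{h}}^2} = \sqrt{2}$, absorbing the factor into the constant of $\tau$ (or, equivalently, splitting the $(\varepsilon,\delta)$ budget across the two releases via basic composition).

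The third step is a post-processing argument. The server's operations---summing the noisy local statistics across clients, adding the deterministic term $\sigma\mathbf{I}$, inverting the resulting matrix, and multiplying by $\tilde{\mathbf{h}}$---depend on client $k$'s raw data only through the already-privatized release $(\mathbf{G}_k + \mathbf{E}_k, \mathbf{h}_k + \mathbf{e}_k)$. By the post-processing invariance of differential privacy, $\tilde{\mathbf{w}}_\sigma$ inherits the same $(\varepsilon, \delta)$ guarantee with respect to $(\mathbf{A}_k, \mathbf{b}_k)$. The contributions of the other $K-1$ clients are independent of client $k$'s data and only add more randomness, which cannot weaken the bound.

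The main obstacle I expect is the bookkeeping around jointly releasing the symmetric matrix $\mathbf{G}_k$ (Frobenius-norm sensitivity) and the vector $\mathbf{h}_k$ (Euclidean-norm sensitivity) under a single noise scale $\tau$. The symmetrization step for $\mathbf{E}_k$ induces entry correlations that must be handled carefully when matching the Frobenius-norm Gaussian mechanism, and pairing two releases strictly consumes more privacy than one in isolation. I would address this by either rescaling $\tau$ by the joint sensitivity $\sqrt{2}$ or invoking Rényi DP accounting to tighten the combined factor, making explicit which (standard) constants are absorbed into the statement of the theorem.
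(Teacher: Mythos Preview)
The paper states Theorem~\ref{thm:privacy} without proof, so there is no argument to compare against directly. Your proposal---Gaussian mechanism on the sensitivity-bounded local statistics followed by post-processing invariance---is the standard route and is exactly what a proof of this statement would invoke.

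You also correctly flag the one genuine subtlety that the paper glosses over: the noise scale $\tau = \sqrt{2\ln(1.25/\delta)}/\varepsilon$ in Algorithm~\ref{alg:private_fusion} is calibrated for a \emph{single} query of $\ell_2$-sensitivity $1$, yet the client releases two objects, $\mathbf{G}_k$ and $\mathbf{h}_k$. As written, the joint release does not strictly achieve $(\varepsilon,\delta)$-DP without either inflating $\tau$ by the joint-sensitivity factor $\sqrt{2}$ or splitting the budget via composition; your proposed fixes are the right ones. This is an imprecision in the paper's algorithm specification rather than a gap in your reasoning. The symmetrization of $\mathbf{E}_k$ you mention is also a real detail (symmetrizing a full Gaussian matrix halves the variance on off-diagonal entries), but it only affects constants and is routinely handled by sampling the upper triangle and mirroring.
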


\begin{theorem}[Privacy Advantage]
\label{thm:privacy_advantage}
An $R$-round iterative protocol with per-round $(\varepsilon_0, \delta_0)$-DP incurs total loss:
\begin{equation}
\varepsilon_{\text{total}} = \sqrt{2R\ln(1/\delta_0)}\varepsilon_0 + R\varepsilon_0(e^{\varepsilon_0} - 1).
\end{equation}
One-Shot $\sigma$-Fusion incurs $(\varepsilon, \delta)$ with no composition penalty.
\end{theorem}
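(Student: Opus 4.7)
The plan is to split the theorem into two independent claims and dispatch each with a standard differential privacy tool: advanced composition for the iterative bound, and post-processing plus Theorem~\ref{thm:privacy} for the one-shot bound.

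First I would set up the iterative protocol formally: an $R$-round mechanism accesses each client's private dataset once per round, and by assumption each such per-round release is $(\varepsilon_0,\delta_0)$-DP. Because later rounds depend on previously broadcast global models, the composition is adaptive, which is precisely the regime covered by the advanced composition theorem of Dwork--Rothblum--Vadhan. Step two would be to apply that theorem with $k=R$ identical per-round budgets, yielding
\begin{equation}
\varepsilon_{\text{total}} = \sqrt{2R\ln(1/\delta_0)}\,\varepsilon_0 + R\varepsilon_0\bigl(e^{\varepsilon_0}-1\bigr),
\end{equation}
which matches the stated formula. No computation beyond instantiating a known theorem is required.

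For the one-shot side, I would observe that Algorithm~\ref{alg:private_fusion} performs exactly one release per client, namely $(\mathbf{G}_k+\mathbf{E}_k,\mathbf{h}_k+\mathbf{e}_k)$, and Theorem~\ref{thm:privacy} has already certified this release as $(\varepsilon,\delta)$-DP with respect to client $k$'s dataset. The subsequent server operations---summation of the noised statistics, addition of $\sigma\mathbf{I}$, matrix inversion, and broadcast---never re-touch raw client data and are therefore post-processing of a DP output. Invoking the post-processing property of differential privacy, $\tilde{\mathbf{w}}_\sigma$ inherits the $(\varepsilon,\delta)$ guarantee unchanged. Since no repeated access to any client's data occurs, there is simply nothing to compose over.

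The main obstacle is conceptual rather than technical: I would take care to articulate \emph{why} the $\sqrt{R}$ scaling is intrinsic to iterative protocols and not an artifact of loose accounting. I would address this by contrasting the data-access counts ($R$ in FedAvg-style protocols versus $1$ in Algorithm~\ref{alg:private_fusion}) and by noting that tighter accountants such as R\'enyi DP or the moments accountant improve the constants but preserve the $\sqrt{R}$ dependence, so the qualitative separation between the two protocols is genuine and not a weakness of the chosen composition bound.
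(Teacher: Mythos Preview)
Your proposal is correct and, in fact, more complete than what the paper offers: the paper states Theorem~\ref{thm:privacy_advantage} without any proof, treating the advanced composition bound as a known result and the one-shot claim as an immediate consequence of Theorem~\ref{thm:privacy}. Your two-step decomposition---instantiating Dwork--Rothblum--Vadhan advanced composition with $k=R$ adaptive rounds for the iterative side, then invoking Theorem~\ref{thm:privacy} plus post-processing invariance for the one-shot side---is exactly the argument the paper implicitly relies on, and your care in noting that the composition must be adaptive (since later rounds condition on broadcast models) is appropriate.

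One small caution that is really about the theorem statement rather than your proof: the displayed formula uses the per-round $\delta_0$ inside the $\ln(1/\cdot)$ term, whereas the standard advanced composition theorem introduces a separate slack parameter $\delta'$ there and yields overall failure probability $R\delta_0+\delta'$. The paper's version conflates these; you may wish to flag this or silently fix it by writing $\sqrt{2R\ln(1/\delta')}\,\varepsilon_0$ and stating the resulting $(\varepsilon_{\text{total}}, R\delta_0+\delta')$ guarantee. This does not affect the qualitative $\sqrt{R}$ separation you correctly emphasize in your closing paragraph.
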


\begin{corollary}[Noise Reduction at Moderate Privacy]
\label{cor:noise_reduction}
For the same total privacy budget $\varepsilon$ and moderate privacy requirements ($\varepsilon \geq 1$), One-Shot requires less noise than iterative methods, improving model utility.
\end{corollary}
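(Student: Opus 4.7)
The plan is to compare, side by side, the Gaussian noise scale $\tau$ needed by each protocol to hit the same total privacy target $(\varepsilon,\delta)$, and then argue that for $\varepsilon\geq 1$ the iterative protocol is forced into a regime where its per-round noise is already no smaller than the one-shot noise, so that accumulated noise across rounds strictly worsens utility.

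First, I would fix a common yardstick. Algorithm~\ref{alg:private_fusion} uses a single Gaussian release with noise standard deviation $\tau_{\mathrm{OS}} = \sqrt{2\ln(1.25/\delta)}/\varepsilon$ per sensitivity unit, which directly realizes the target $(\varepsilon,\delta)$ via the classical Gaussian mechanism (Definition~\ref{def:sensitivity} fixes the sensitivities at $1$, so this is also the absolute noise scale). For the iterative benchmark, I would invert the advanced composition formula from Theorem~\ref{thm:privacy_advantage}: given target $(\varepsilon,\delta)$ over $R$ rounds with per-round budget $(\varepsilon_0,\delta_0)$ and total $\delta = R\delta_0 + \delta'$, solve
\begin{equation}
\varepsilon \;=\; \sqrt{2R\ln(1/\delta')}\,\varepsilon_0 \;+\; R\varepsilon_0(e^{\varepsilon_0}-1)
\label{eq:compplan}
\end{equation}
for $\varepsilon_0$. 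In the regime $\varepsilon_0\leq 1$ the quadratic term $R\varepsilon_0(e^{\varepsilon_0}-1)$ is dominated by the $\sqrt{R}$ term, yielding $\varepsilon_0 \leq \varepsilon/\sqrt{2R\ln(1/\delta')}$, and hence the per-round Gaussian noise must satisfy $\tau_{\mathrm{iter}} \geq \sqrt{2\ln(1.25/\delta_0)}/\varepsilon_0 \geq C\sqrt{R}\cdot \tau_{\mathrm{OS}}$ for a log-factor $C = \sqrt{\ln(1.25/\delta_0)\ln(1/\delta')/\ln(1.25/\delta)}$ that is $\Theta(1)$ under the standard choice $\delta_0,\delta' = \Theta(\delta/R)$.

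Next I would argue that this per-round gap already implies the corollary, because the noise entering the final model under iterative DP is at least as bad as $\tau_{\mathrm{iter}}$ on each update and compounds across rounds: averaging $R$ independent Gaussian perturbations of scale $\tau_{\mathrm{iter}}$ into a model estimate leaves an effective noise on the deployed predictor that does not shrink below $\tau_{\mathrm{iter}}/\sqrt{R} = C\tau_{\mathrm{OS}}$ per unit step, and typically grows with $R$ once one accounts for the learning-rate schedule used to tolerate DP-SGD noise; in either accounting the one-shot protocol dominates. The role of the hypothesis $\varepsilon\geq 1$ is precisely to place $\tau_{\mathrm{OS}}$ in the moderate-noise regime $\tau_{\mathrm{OS}} = O(\sqrt{\ln(1/\delta)})$, where Gaussian-mechanism utility is governed linearly by $\tau$; for very small $\varepsilon$ one-shot would also pay a large multiplicative noise, but the ratio $\tau_{\mathrm{iter}}/\tau_{\mathrm{OS}}$ remains $\Omega(\sqrt{R})$, so the qualitative improvement persists while the quantitative statement ``less noise'' is cleanest at $\varepsilon\geq 1$.

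The main obstacle I expect is the bookkeeping around \eqref{eq:compplan}: cleanly inverting advanced composition requires either restricting to the small-$\varepsilon_0$ regime (so that $e^{\varepsilon_0}-1\approx \varepsilon_0$ and the first term dominates) or handling both terms simultaneously, and one must track the split of the total $\delta$ between the per-round $\delta_0$ and the slack $\delta'$ so that the resulting log factor in the comparison ratio is indeed $\Theta(1)$ rather than hiding a $\sqrt{\log R}$ blow-up in either direction. Once this inversion is made precise---most transparently by fixing $\delta_0 = \delta/(2R)$ and $\delta' = \delta/2$---the $\sqrt{R}$ separation between $\tau_{\mathrm{iter}}$ and $\tau_{\mathrm{OS}}$ falls out algebraically and the utility conclusion follows from the standard observation that ridge regression with perturbed sufficient statistics has mean-squared error governed linearly by the noise variance through $(\mathbf{G}+\sigma\mathbf{I})^{-1}$.
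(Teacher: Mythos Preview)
The paper itself offers no formal proof of this corollary: it is stated as an immediate consequence of Theorem~\ref{thm:privacy_advantage} (the composition formula), and the qualification $\varepsilon\geq 1$ is justified only afterward, informally, in Remark~\ref{rem:high_privacy} and in Experiment~5. Your plan is therefore far more detailed than anything the paper supplies, and the core noise-scale comparison---inverting advanced composition to get $\varepsilon_0 \lesssim \varepsilon/\sqrt{2R\ln(1/\delta')}$ and hence $\tau_{\mathrm{iter}} = \Omega(\sqrt{R})\,\tau_{\mathrm{OS}}$---is correct and is exactly the mechanism the paper has in mind.

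Where your argument diverges from the paper is in the \emph{role of the hypothesis $\varepsilon\geq 1$}. You explain it as the regime where $\tau_{\mathrm{OS}}$ is moderate and utility is governed linearly by $\tau$, and you assert that the qualitative improvement persists even at small $\varepsilon$. The paper's position is the opposite: per Remark~\ref{rem:high_privacy} and Table~\ref{tab:privacy}, at $\varepsilon<0.5$ iterative DP actually \emph{beats} private One-Shot in utility, because One-Shot injects noise into $d^2$ Gram-matrix entries and then passes the result through a matrix inversion that can amplify perturbations when $\tilde{\mathbf{G}}+\sigma\mathbf{I}$ becomes ill-conditioned, whereas iterative gradient noise partially cancels across rounds. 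Your final sentence gestures at the $(\mathbf{G}+\sigma\mathbf{I})^{-1}$ dependence but treats it as benign and linear; the paper's whole point is that it is \emph{not} benign at small $\varepsilon$. So your noise-scale inequality $\tau_{\mathrm{iter}}\geq C\sqrt{R}\,\tau_{\mathrm{OS}}$ is correct at every $\varepsilon$, but it does not by itself imply the utility claim, and the threshold $\varepsilon\geq 1$ is there precisely to rule out the amplification regime---not merely to make the bound ``cleaner.'' If you want a proof that matches the paper's intent, you should keep your composition-inversion step but replace the closing paragraph with an explicit perturbation bound for $(\tilde{\mathbf{G}}+\sigma\mathbf{I})^{-1}\tilde{\mathbf{h}}$ showing that the inversion is stable once $\tau_{\mathrm{OS}}$ is below the smallest eigenvalue scale, which is what $\varepsilon\geq 1$ effectively guarantees.
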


\begin{remark}[High-Privacy Regime]
\label{rem:high_privacy}
At very high privacy ($\varepsilon < 0.5$), noise added to the $d^2$ Gram matrix entries can cause numerical instability during matrix inversion. In this regime, iterative methods may benefit from noise averaging across rounds. For applications requiring $\varepsilon < 0.5$, we recommend secure aggregation~\cite{ref18} to add noise only to the aggregated sum, or consider iterative private methods. We discuss this limitation and potential solutions in Section~\ref{sec:future_work}.
\end{remark}

\subsection{Handling High-Dimensional Features}
\label{sec:high_dim}

For $d > 1000$, transmitting $\mathcal{O}(d^2)$ values may be prohibitive. We propose dimensionality reduction via random projection:

\textbf{Random Projection Protocol:} Let $\mathbf{R} \in \mathbb{R}^{d \times m}$ be a shared random matrix with $m \ll d$ (entries i.i.d. $\mathcal{N}(0, 1/m)$). Each client computes projected features:
\begin{equation}
\tilde{\mathbf{A}}_k = \mathbf{A}_k \mathbf{R} \in \mathbb{R}^{n_k \times m}, \quad \tilde{\mathbf{G}}_k = \tilde{\mathbf{A}}_k^\top \tilde{\mathbf{A}}_k \in \mathbb{R}^{m \times m}.
\end{equation}

\begin{proposition}[Johnson-Lindenstrauss Guarantee]
\label{prop:jl}
With $m = \mathcal{O}(\epsilon^{-2} \log n)$, pairwise distances are preserved within factor $(1 \pm \epsilon)$ with high probability. Communication reduces from $\mathcal{O}(d^2)$ to $\mathcal{O}(m^2)$.
\end{proposition}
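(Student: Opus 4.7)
The plan is to recognize this as a direct application of the classical Johnson--Lindenstrauss lemma for Gaussian random projections, and to verify that the scaling $R_{ij} \sim \mathcal{N}(0, 1/m)$ used here yields an unbiased estimate of squared distances. First I would fix any two rows $\mathbf{x}, \mathbf{y}$ among the $n = \sum_k n_k$ stacked data points and set $\mathbf{v} = \mathbf{x} - \mathbf{y}$. The projected difference $\mathbf{R}^\top \mathbf{v} \in \mathbb{R}^m$ has independent coordinates distributed as $\mathcal{N}(0, \|\mathbf{v}\|_2^2 / m)$, so that $m \|\mathbf{R}^\top \mathbf{v}\|_2^2 / \|\mathbf{v}\|_2^2$ is $\chi^2_m$-distributed with mean $m$.

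Next I would invoke a standard chi-squared concentration bound (Laurent--Massart) to obtain
\begin{equation}
\Pr\!\left[\bigl|\|\mathbf{R}^\top \mathbf{v}\|_2^2 - \|\mathbf{v}\|_2^2\bigr| > \epsilon \|\mathbf{v}\|_2^2\right] \leq 2\exp(-c m \epsilon^2)
\end{equation}
for a universal $c > 0$ and any $\epsilon \in (0,1)$. A union bound over the $\binom{n}{2}$ distinct pairs of rows caps the total failure probability at $n^2 \exp(-c m \epsilon^2)$. Forcing this below a target $\delta$ rearranges to $m \geq c^{-1} \epsilon^{-2} \log(n^2 / \delta) = \mathcal{O}(\epsilon^{-2} \log n)$, which is exactly the claimed sample complexity (with the $\log(1/\delta)$ absorbed into the constant). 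On the high-probability event, every pairwise squared distance is preserved up to a multiplicative factor $1 \pm \epsilon$, and taking square roots transfers the bound to distances themselves.

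The communication accounting is then immediate: each client sends the projected Gram matrix $\tilde{\mathbf{G}}_k \in \mathbb{R}^{m \times m}$ (symmetric, so $m(m+1)/2$ scalars) and $\tilde{\mathbf{h}}_k = \tilde{\mathbf{A}}_k^\top \mathbf{b}_k \in \mathbb{R}^m$, giving $\mathcal{O}(m^2)$ payload per client in place of the unprojected $\mathcal{O}(d^2)$; the projection $\mathbf{R}$ is shared once via a common random seed and need not be retransmitted.

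I do not expect a genuine obstacle in the proof itself, since the argument is classical. The main subtlety, rather, is interpretive: pairwise distance preservation among the $n$ observed points does not by itself certify that the projected Gram matrix faithfully approximates $\mathbf{A}^\top \mathbf{A}$ as an operator on all of $\mathbb{R}^d$, which would be needed to control the error of the projected ridge estimator. A full operator-level guarantee requires either a subspace embedding (with $m = \Omega(d/\epsilon^2)$) or an effective-dimension analysis tied to $\mathrm{tr}((\mathbf{G}+\sigma\mathbf{I})^{-1}\mathbf{G})$. I would be careful to state the proposition only at the distance-preservation level promised, and defer sharper downstream accuracy claims to the empirical study in Section~\ref{sec:experiments}.
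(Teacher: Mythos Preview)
Your argument is correct and is the standard proof of the Johnson--Lindenstrauss lemma for Gaussian projections: the $\chi^2_m$ distribution of the rescaled projected norm, Laurent--Massart (or equivalent sub-exponential) concentration, and a union bound over the $\binom{n}{2}$ pairs together yield $m = \mathcal{O}(\epsilon^{-2}\log n)$, and the communication count is straightforward once the projected statistics replace the full ones.

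There is, however, nothing to compare against: the paper states Proposition~\ref{prop:jl} without proof, treating it as a citation of the classical result rather than something to be established within the paper. Your write-up supplies a self-contained argument where the paper simply invokes the lemma. The interpretive caveat you add---that pairwise distance preservation among the $n$ rows is weaker than a subspace-embedding guarantee for $\mathbf{A}^\top\mathbf{A}$ as an operator, and hence does not by itself control the ridge estimator error---is a genuine and useful observation that the paper sidesteps by handling the downstream accuracy separately (Proposition~\ref{prop:approx_error} and the experiments in Section~\ref{sec:exp_high_dim}).
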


\begin{proposition}[Approximation Error Bound]
\label{prop:approx_error}
Let $\tilde{\mathbf{w}}$ denote the solution obtained via random projection with target dimension $m$. Then:
\begin{equation}
\|\tilde{\mathbf{w}} - \mathbf{w}_\sigma\|_2 \leq \mathcal{O}\left(\sqrt{\frac{d}{m}}\right) \|\mathbf{w}_\sigma\|_2
\end{equation}
with high probability, where $\mathbf{w}_\sigma$ is the exact solution.
\end{proposition}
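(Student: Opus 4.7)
The plan is to reduce the approximation error to a matrix perturbation controlled by how closely $\mathbf{R}\mathbf{R}^\top$ approximates $\mathbf{I}_d$, and then invoke Gaussian concentration for $\mathbf{R}$ to obtain the claimed $\sqrt{d/m}$ rate.

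First I would formalize the sketched solution. Let $\hat{\mathbf{w}} = (\mathbf{R}^\top\mathbf{G}\mathbf{R} + \sigma\mathbf{I}_m)^{-1}\mathbf{R}^\top\mathbf{h} \in \mathbb{R}^m$ denote the ridge solution in the projected coordinates and $\tilde{\mathbf{w}} = \mathbf{R}\hat{\mathbf{w}} \in \mathbb{R}^d$ its lift back into the original feature space, which is the natural object to compare with $\mathbf{w}_\sigma$. Substituting $\mathbf{G} = \mathbf{A}^\top\mathbf{A}$, $\mathbf{h} = \mathbf{A}^\top\mathbf{b}$ and applying the push-through identity $(\mathbf{M}^\top\mathbf{M} + \sigma\mathbf{I}_m)^{-1}\mathbf{M}^\top = \mathbf{M}^\top(\mathbf{M}\mathbf{M}^\top + \sigma\mathbf{I}_n)^{-1}$ with $\mathbf{M} = \mathbf{A}\mathbf{R}$ puts both solutions into kernel form,
\begin{equation*}
\tilde{\mathbf{w}} = \mathbf{S}\mathbf{A}^\top(\mathbf{A}\mathbf{S}\mathbf{A}^\top + \sigma\mathbf{I}_n)^{-1}\mathbf{b}, \quad \mathbf{w}_\sigma = \mathbf{A}^\top(\mathbf{A}\mathbf{A}^\top + \sigma\mathbf{I}_n)^{-1}\mathbf{b},
\end{equation*}
where $\mathbf{S} := \mathbf{R}\mathbf{R}^\top$ is the only random object. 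Setting $\mathbf{S} = \mathbf{I}_d$ recovers $\mathbf{w}_\sigma$, so the entire approximation error is a functional of $\mathbf{S} - \mathbf{I}_d$.

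Next I would establish Gaussian concentration of $\mathbf{S}$. Because $\mathbf{R}$ has i.i.d.\ $\mathcal{N}(0,1/m)$ entries, $\mathbb{E}[\mathbf{S}] = \mathbf{I}_d$, and a direct second-moment computation shows $\|(\mathbf{S}-\mathbf{I}_d)\mathbf{v}\|_2 \lesssim \sqrt{d/m}\,\|\mathbf{v}\|_2$ with probability $\geq 1 - e^{-c d}$ for any fixed $\mathbf{v}$; an $\varepsilon$-net over the unit sphere of an at-most-$d$-dimensional subspace lifts this to a uniform bound $\|(\mathbf{S}-\mathbf{I}_d)\mathbf{P}\|_{\mathrm{op}} \lesssim \sqrt{d/m}$ for any fixed orthogonal projector $\mathbf{P}$ onto such a subspace. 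I would then use the resolvent identity $\mathbf{X}^{-1} - \mathbf{Y}^{-1} = \mathbf{X}^{-1}(\mathbf{Y}-\mathbf{X})\mathbf{Y}^{-1}$ with $\mathbf{X} = \mathbf{A}\mathbf{S}\mathbf{A}^\top + \sigma\mathbf{I}_n$ and $\mathbf{Y} = \mathbf{A}\mathbf{A}^\top + \sigma\mathbf{I}_n$, together with the elementary bounds $\|(\mathbf{K}+\sigma\mathbf{I})^{-1}\|_{\mathrm{op}} \leq 1/\sigma$ and $\|\mathbf{A}^\top(\mathbf{A}\mathbf{A}^\top + \sigma\mathbf{I})^{-1}\mathbf{A}\|_{\mathrm{op}} \leq 1$, to telescope $\tilde{\mathbf{w}} - \mathbf{w}_\sigma$ into two terms each carrying one factor of $\mathbf{S} - \mathbf{I}_d$. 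Taking norms and substituting the concentration bound yields $\|\tilde{\mathbf{w}} - \mathbf{w}_\sigma\|_2 \leq C\sqrt{d/m}\,\|\mathbf{w}_\sigma\|_2$, as claimed.

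The main obstacle is that $\mathbf{S}$ has rank $m < d$ and therefore cannot approximate $\mathbf{I}_d$ globally in operator norm --- indeed $\|\mathbf{I}_d - \mathbf{S}\|_{\mathrm{op}} \geq 1$ deterministically. The concentration step therefore cannot be applied uniformly on all of $\mathbb{R}^d$, only on the data-relevant subspace (e.g.\ $\mathrm{range}(\mathbf{A}^\top)$, or more sharply the $\sigma$-effective subspace with statistical dimension $d_\sigma = \mathrm{tr}(\mathbf{G}(\mathbf{G}+\sigma\mathbf{I})^{-1})$). Coordinating this subspace choice with the resolvent telescoping so that the accompanying constants do not degrade with $\|\mathbf{A}\|_{\mathrm{op}}/\sigma$ is the delicate bookkeeping; it is the regularizer $\sigma\mathbf{I}$, absorbing the component of $\mathbf{w}_\sigma$ on which $\mathbf{S}$ is ill-behaved, that ultimately produces the clean $\sqrt{d/m}$ worst-case rate rather than a bound that blows up with the spectrum of $\mathbf{A}$.
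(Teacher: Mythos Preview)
The paper states Proposition~\ref{prop:approx_error} without proof, so there is no argument in the paper to compare against; the bound is asserted as a known consequence of Johnson--Lindenstrauss-type sketching and left at that.

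Your plan is the standard route for sketched ridge regression: rewrite both solutions in kernel form so that the randomness enters only through $\mathbf{S}=\mathbf{R}\mathbf{R}^\top$, then combine a resolvent perturbation with concentration of $\mathbf{S}$ about $\mathbf{I}_d$ on the data-relevant subspace. The decomposition via the push-through identity and the use of $\|(\mathbf{K}+\sigma\mathbf{I})^{-1}\|\le 1/\sigma$ are exactly right, and you correctly flag the genuine subtlety --- that $\mathbf{S}$ has rank $m<d$ so global operator-norm control is impossible --- and correctly locate the fix in restricting to $\mathrm{range}(\mathbf{A}^\top)$ or the $\sigma$-effective subspace. That is more careful than anything the paper offers.

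One comment on scope: as you hint, the honest version of this argument yields a bound in terms of the effective dimension $d_\sigma=\mathrm{tr}\bigl(\mathbf{G}(\mathbf{G}+\sigma\mathbf{I})^{-1}\bigr)$ (or at worst $\mathrm{rank}(\mathbf{A})$), with constants depending on $\|\mathbf{A}\|_{\mathrm{op}}^2/\sigma$, rather than a clean dimension-only $\sqrt{d/m}$. The paper's statement should be read as an informal worst-case rate that suppresses these problem-dependent factors inside the $\mathcal{O}(\cdot)$; your sketch is already more precise than the proposition it is proving, and the ``delicate bookkeeping'' you mention is real but well charted in the sketching literature.
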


This introduces a fundamental trade-off: \emph{exact recovery requires full $\mathbf{G}_k$ transmission; random projection provides bounded approximation error with reduced communication.} We experimentally characterize this trade-off in Section~\ref{sec:exp_high_dim}.

\subsection{Server Computation Complexity}

\begin{remark}[Server-Side Complexity]
\label{rem:server_compute}
The server performs $\mathcal{O}(d^3)$ matrix inversion via Cholesky decomposition. For $d = 1000$, this completes in $<1$ second on modern hardware. This one-time cost compares favorably to FedAvg's cumulative coordination overhead across hundreds of rounds.
\end{remark}

\subsection{Client Dropout Robustness}

\begin{theorem}[Dropout Robustness]
\label{thm:dropout}
Let $\mathcal{S} \subseteq \{1, \ldots, K\}$ denote the set of participating clients. The output of Algorithm~\ref{alg:one_shot} restricted to $\mathcal{S}$ equals the exact centralized solution on participating client data:
\begin{equation}
\mathbf{w}_\sigma^{\mathcal{S}} = \left(\sum_{k \in \mathcal{S}} \mathbf{A}_k^\top\mathbf{A}_k + \sigma\mathbf{I}\right)^{-1} \sum_{k \in \mathcal{S}} \mathbf{A}_k^\top\mathbf{b}_k.
\end{equation}
\end{theorem}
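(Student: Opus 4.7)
The plan is to recognize this statement as a direct specialization of the exact recovery guarantee (Theorem~\ref{thm:exact}) applied to the subproblem defined by the participating clients. The key observation is that Algorithm~\ref{alg:one_shot}, when executed with only the clients in $\mathcal{S}$ transmitting, is algorithmically indistinguishable from a full run of Algorithm~\ref{alg:one_shot} on a federated system whose total client population is $\mathcal{S}$ itself. So the heavy lifting has already been done; what remains is essentially a bookkeeping argument.

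Concretely, I would proceed in three short steps. First, define the reduced global data matrices $\mathbf{A}_{\mathcal{S}}$ and $\mathbf{b}_{\mathcal{S}}$ by stacking only the blocks $(\mathbf{A}_k, \mathbf{b}_k)$ for $k \in \mathcal{S}$; this is legitimate because row ordering in a least-squares system is irrelevant. Second, invoke the additive decomposition (Theorem~\ref{thm:decomposition}), restricted to $\mathcal{S}$, to identify
\begin{equation}
\sum_{k \in \mathcal{S}} \mathbf{A}_k^\top \mathbf{A}_k = \mathbf{A}_{\mathcal{S}}^\top \mathbf{A}_{\mathcal{S}}, \qquad \sum_{k \in \mathcal{S}} \mathbf{A}_k^\top \mathbf{b}_k = \mathbf{A}_{\mathcal{S}}^\top \mathbf{b}_{\mathcal{S}}.
\end{equation}
Third, apply Theorem~\ref{thm:exact} to the subsystem indexed by $\mathcal{S}$: the output $\mathbf{w}_\sigma^{\mathcal{S}}$ of the restricted protocol coincides with $(\mathbf{A}_{\mathcal{S}}^\top \mathbf{A}_{\mathcal{S}} + \sigma \mathbf{I})^{-1} \mathbf{A}_{\mathcal{S}}^\top \mathbf{b}_{\mathcal{S}}$, which is exactly the claimed expression.

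Before chaining these together I would also check well-posedness, since dropout could in principle reduce the rank of the aggregated Gram matrix: fewer participating clients means fewer contributing rank-one outer products. However, Proposition~\ref{prop:well_posed} guarantees that $(\sum_{k \in \mathcal{S}} \mathbf{G}_k + \sigma \mathbf{I})$ is invertible for every $\sigma > 0$ regardless of how many clients drop out, because $\sigma \mathbf{I}$ shifts all eigenvalues strictly above zero. This is the only place where a subtlety could hide, and it is resolved automatically by the regularizer.

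I do not anticipate a genuine obstacle in this proof; the statement is effectively a corollary. The only point worth emphasizing in the write-up is the interpretive consequence: under dropout, the federated estimator is not a noisy or biased approximation of the original centralized solution on all $K$ clients, but rather the \emph{exact} centralized ridge estimator on whatever data survived, inheriting the conditioning guarantees of Corollary~\ref{cor:condition} with $\mathbf{G}$ replaced by $\sum_{k\in\mathcal{S}}\mathbf{G}_k$.
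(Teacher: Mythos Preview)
Your proposal is correct. The paper does not supply an explicit proof of this theorem---it is stated and immediately followed by an interpretive remark---so your reduction to Theorem~\ref{thm:exact} on the subsystem indexed by $\mathcal{S}$, together with the well-posedness check via Proposition~\ref{prop:well_posed}, is precisely the implicit argument the paper's framework relies on.
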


\begin{remark}[Robustness Advantage]
If 50\% of clients fail to transmit, iterative methods produce inconsistent model states across rounds. One-Shot $\sigma$-Fusion produces the \emph{exact optimal model} for whichever clients participate---not a corrupted approximation, but the best possible answer given available data.
\end{remark}

\subsection{Comparison: Sufficient Statistics vs. Gradients}
\label{sec:gradient_comparison}

A natural question: why transmit $(\mathbf{G}_k, \mathbf{h}_k)$ rather than gradients $\nabla L_k(\mathbf{w})$?

\textbf{Key Distinction:} Gradients depend on current model $\mathbf{w}$; sufficient statistics do not.

One gradient step from $\mathbf{w}^{(0)} = \mathbf{0}$ yields:
\begin{equation}
\mathbf{w}^{(1)} = -\eta \sum_{k=1}^{K} \nabla L_k(\mathbf{0}) = \eta \sum_{k=1}^{K} \mathbf{h}_k = \eta \mathbf{h}.
\end{equation}

This equals the optimal solution only if $\eta = (\mathbf{G} + \sigma\mathbf{I})^{-1}$---but computing this ``optimal learning rate'' requires knowing $\mathbf{G}$, returning us to our approach.

\begin{proposition}[Gradient Insufficiency]
\label{prop:gradient_insufficient}
No single gradient step achieves the ridge regression optimum unless the effective learning rate matrix equals $(\mathbf{G} + \sigma\mathbf{I})^{-1}$, which requires transmitting $\mathbf{G}$.
\end{proposition}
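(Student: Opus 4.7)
The plan is to treat a ``single gradient step'' as the preconditioned update $\mathbf{w}^{(1)} = \mathbf{w}^{(0)} - \mathbf{H}\,\nabla L_\sigma(\mathbf{w}^{(0)})$ for some matrix-valued effective learning rate $\mathbf{H} \in \mathbb{R}^{d \times d}$ (the scalar case $\eta$ discussed in the text is just $\mathbf{H}=\eta\mathbf{I}$), and to characterize exactly when $\mathbf{w}^{(1)} = \mathbf{w}_\sigma$. First I would substitute the gradient $\nabla L_\sigma(\mathbf{w}) = 2(\mathbf{G}+\sigma\mathbf{I})\mathbf{w} - 2\mathbf{h}$ from Section~\ref{sec:formulation}, giving $\mathbf{w}^{(1)} = \bigl(\mathbf{I} - 2\mathbf{H}(\mathbf{G}+\sigma\mathbf{I})\bigr)\mathbf{w}^{(0)} + 2\mathbf{H}\mathbf{h}$, and then equate this with $\mathbf{w}_\sigma = (\mathbf{G}+\sigma\mathbf{I})^{-1}\mathbf{h}$ from~\eqref{eq:solution_stats}.

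Next I would specialize to the cold start $\mathbf{w}^{(0)} = \mathbf{0}$, which is the only initialization available without prior communication and which matches the derivation preceding the proposition. The equality then collapses to $2\mathbf{H}\mathbf{h} = (\mathbf{G}+\sigma\mathbf{I})^{-1}\mathbf{h}$. Since the protocol must recover the ridge optimum for whatever aggregated moment vector the clients produce, this identity has to hold for $\mathbf{h}$ ranging over a spanning set of $\mathbb{R}^d$, which forces the operator identity $2\mathbf{H} = (\mathbf{G}+\sigma\mathbf{I})^{-1}$ as claimed. To convert this algebraic condition into the information-theoretic conclusion, I would observe that any $\mathbf{H}$ satisfying it uniquely determines $\mathbf{G}$ via $\mathbf{G} = \tfrac{1}{2}\mathbf{H}^{-1} - \sigma\mathbf{I}$, so the server can hold $\mathbf{H}$ only if it has received enough information to reconstruct $\mathbf{G}$; this is exactly the $\mathcal{O}(d^2)$ payload of Algorithm~\ref{alg:one_shot}.

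The main obstacle I expect is closing the ``unless'' clause cleanly against two escape hatches. The first is a data-dependent $\mathbf{H} = \mathbf{H}(\mathbf{h})$ that recovers $\mathbf{w}_\sigma$ without equaling $(\mathbf{G}+\sigma\mathbf{I})^{-1}$ as an operator; I would rule this out by noting that such an $\mathbf{H}$ must be computed from the distributed data and so still reveals the same second-order structure, so it is not a genuine reduction in communication. The second is a warm start $\mathbf{w}^{(0)} \neq \mathbf{0}$: the residual $\mathbf{r} = \mathbf{w}_\sigma - \mathbf{w}^{(0)}$ must then satisfy $2\mathbf{H}(\mathbf{G}+\sigma\mathbf{I})\mathbf{r} = \mathbf{r}$, and requiring this across all problem instances again forces $\mathbf{H} = \tfrac{1}{2}(\mathbf{G}+\sigma\mathbf{I})^{-1}$; moreover, producing a nontrivial $\mathbf{w}^{(0)}$ consumes a prior communication round, contradicting the single-step hypothesis. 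With these two cases dispatched, the proposition follows from the operator identity and the explicit inversion $\mathbf{G} = \tfrac{1}{2}\mathbf{H}^{-1} - \sigma\mathbf{I}$.
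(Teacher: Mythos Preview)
Your proposal is correct and follows the same core argument the paper gives in the paragraph immediately preceding the proposition: evaluate the gradient step from $\mathbf{w}^{(0)}=\mathbf{0}$, observe that the update is (a scalar or matrix multiple of) $\mathbf{h}$, and conclude that matching $(\mathbf{G}+\sigma\mathbf{I})^{-1}\mathbf{h}$ forces the effective learning rate to equal the inverse regularized Gram matrix. Your version is considerably more thorough than the paper's---you generalize to a matrix preconditioner $\mathbf{H}$, quantify over $\mathbf{h}$ to promote the vector equation to an operator identity, explicitly invert to recover $\mathbf{G}$ from $\mathbf{H}$, and close the warm-start and data-dependent-$\mathbf{H}$ loopholes---whereas the paper states the proposition without a formal proof and leaves these details implicit.
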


\subsection{Federated Cross-Validation for $\sigma$ Selection}
\label{sec:cv}

\begin{proposition}[Federated Leave-One-Client-Out CV]
\label{prop:cv}
Since statistics are additive, the server can perform leave-one-client-out cross-validation without additional communication rounds:
\begin{enumerate}
    \item Server receives $\{(\mathbf{G}_k, \mathbf{h}_k)\}_{k=1}^{K}$
    \item For each candidate $\sigma$ and each held-out client $k$:
    \begin{equation}
    \mathbf{w}_{-k}(\sigma) = \left(\sum_{j \neq k} \mathbf{G}_j + \sigma\mathbf{I}\right)^{-1} \sum_{j \neq k} \mathbf{h}_j
    \end{equation}
    \item Client $k$ computes validation loss and reports one scalar
    \item Server selects $\sigma^* = \arg\min_\sigma \sum_{k=1}^{K} \ell_k(\sigma)$
\end{enumerate}
This requires $\mathcal{O}(K \times |\Sigma|)$ additional scalars---negligible overhead.
\end{proposition}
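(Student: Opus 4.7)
The plan is to prove Proposition~\ref{prop:cv} by chaining together the additive decomposition of Theorem~\ref{thm:decomposition} with the exact recovery guarantee of Theorem~\ref{thm:exact}, and then carefully counting what scalars actually need to traverse the network beyond what Algorithm~\ref{alg:one_shot} already sends. The statement is really two claims: (i) $\mathbf{w}_{-k}(\sigma)$ is the \emph{exact} ridge estimator trained on the union of all clients except $k$, computable from already-transmitted data, and (ii) the total incremental communication is $\mathcal{O}(K|\Sigma|)$.

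First I would verify (i) by applying Theorem~\ref{thm:decomposition} to the subset $\{1,\ldots,K\}\setminus\{k\}$, which gives $\sum_{j\neq k}\mathbf{G}_j = (\mathbf{A}_{-k})^\top\mathbf{A}_{-k}$ and $\sum_{j\neq k}\mathbf{h}_j = (\mathbf{A}_{-k})^\top\mathbf{b}_{-k}$, where $(\mathbf{A}_{-k},\mathbf{b}_{-k})$ is the row-concatenation of all clients except $k$. Substituting into the formula defining $\mathbf{w}_{-k}(\sigma)$ and applying Theorem~\ref{thm:exact} to this subset shows that $\mathbf{w}_{-k}(\sigma)$ equals the centralized ridge solution on $(\mathbf{A}_{-k},\mathbf{b}_{-k})$, so the held-out estimator is not an approximation but the true leave-one-client-out model. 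Crucially, the server forms $\sum_{j\neq k}\mathbf{G}_j = \mathbf{G} - \mathbf{G}_k$ and $\sum_{j\neq k}\mathbf{h}_j = \mathbf{h} - \mathbf{h}_k$ using only statistics already received in Phase~1 of Algorithm~\ref{alg:one_shot}, so no additional client transmission is required to construct any of the $K$ candidate models.

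Next I would establish (ii) by auditing each arrow of the cross-validation loop. For each $\sigma \in \Sigma$ and each $k$, the server solves the $d\times d$ linear system to obtain $\mathbf{w}_{-k}(\sigma)$ locally, pushes it to client $k$ (piggybacking on the broadcast step already present in Algorithm~\ref{alg:one_shot}, Line~11), and client $k$ replies with the single scalar $\ell_k(\sigma) = \tfrac{1}{n_k}\|\mathbf{A}_k\mathbf{w}_{-k}(\sigma) - \mathbf{b}_k\|_2^2$. Summing over the $K|\Sigma|$ pairs yields exactly $K|\Sigma|$ scalars uploaded back to the server; the server then selects $\sigma^\star$ by a local $\arg\min$ over aggregated validation losses. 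Since the original protocol already transmits $\Theta(d^2)$ values per client, $K|\Sigma|$ scalars (with $|\Sigma|$ a small, problem-independent grid) is asymptotically negligible, establishing the stated complexity.

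The main obstacle is really a modeling question rather than a mathematical one: whether to count the download of $\mathbf{w}_{-k}(\sigma)$ from server to client $k$ as incremental overhead. I would address this by remarking that the download is $d|\Sigma|$ per client, still dominated by the $\Theta(d^2)$ upload already incurred; alternatively, the server can batch the $|\Sigma|$ validation requests into a single message per client, and a sharper accounting treats only the client-to-server feedback (the losses) as the genuine added cost, giving the stated $\mathcal{O}(K|\Sigma|)$. Beyond this bookkeeping point, the result is essentially a direct corollary of Theorems~\ref{thm:decomposition} and~\ref{thm:exact} and requires no further technical work.
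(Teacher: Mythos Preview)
Your proposal is correct and, in fact, more detailed than what the paper itself provides: the paper states Proposition~\ref{prop:cv} without a separate proof, treating it as immediate from the additivity of sufficient statistics (Theorem~\ref{thm:decomposition}). Your argument---reducing to Theorems~\ref{thm:decomposition} and~\ref{thm:exact} applied to the subset $\{1,\ldots,K\}\setminus\{k\}$ via $\mathbf{G}-\mathbf{G}_k$, $\mathbf{h}-\mathbf{h}_k$, and then auditing the $K|\Sigma|$ scalar uploads---is exactly the intended justification, spelled out carefully; your bookkeeping remark about the $d|\Sigma|$ download cost is a fair observation that the paper simply glosses over.
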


\subsection{Summary of Theoretical Properties}

Table~\ref{tab:method_summary} compares One-Shot $\sigma$-Fusion with iterative methods.

\begin{table}[t]
\centering
\caption{Comparison with Iterative Federated Learning}
\label{tab:method_summary}
\begin{tabular}{lcc}
\toprule
\textbf{Property} & \textbf{One-Shot} & \textbf{FedAvg/FedProx} \\
\midrule
Communication rounds & 1 & 100--500 \\
Solution quality & Exact & Approximate \\
Hyperparameters & $\sigma$ only & $\eta, E, \sigma, \mu, R$ \\
Heterogeneity & Invariant & Requires correction \\
Privacy composition & None & $\mathcal{O}(\sqrt{R})$ \\
Client dropout & Exact for participants & Inconsistent state \\
Model class & Linear/Kernel & Any differentiable \\
\bottomrule
\end{tabular}
\end{table}

\section{Experiments}
\label{sec:experiments}

We present comprehensive experimental evaluation of One-Shot $\sigma$-Fusion across seven dimensions: (1) comparison with baselines, (2) heterogeneity robustness, (3) communication and computation efficiency, (4) convergence behavior, (5) privacy-utility tradeoffs, (6) scalability, and (7) high-dimensional scaling with random projections. All experiments use synthetic data with controlled heterogeneity to enable precise analysis of algorithm properties.

\subsection{Experimental Setup}

\subsubsection{Baselines}
We compare against three methods:
\begin{itemize}
    \item \textbf{FedAvg}~\cite{ref1}: Learning rate $\eta = 0.01$, local epochs $E = 5$, full client participation per round.
    \item \textbf{FedProx}~\cite{ref6}: Same as FedAvg with proximal parameter $\mu = 0.01$.
    \item \textbf{Centralized Ridge}: Oracle upper bound with access to all data.
\end{itemize}

\subsubsection{Data Generation}
We generate synthetic heterogeneous regression data as follows. For $K$ clients with $n_k = 500$ samples each and feature dimension $d$:
\begin{enumerate}
    \item Sample global weight vector $\mathbf{w}^* \sim \mathcal{N}(\mathbf{0}, \mathbf{I}_d)$, normalized to unit norm.
    \item For each client $k$, sample feature mean $\boldsymbol{\mu}_k = \gamma \cdot \mathbf{u}_k$ where $\gamma \in [0, 1]$ controls heterogeneity and $\mathbf{u}_k$ is a random unit vector.
    \item Sample client $k$ features: $\mathbf{a}_{ki} \sim \mathcal{N}(\boldsymbol{\mu}_k, \boldsymbol{\Sigma}_k)$ where $\boldsymbol{\Sigma}_k$ has mild variance heterogeneity.
    \item Generate targets: $b_{ki} = \mathbf{a}_{ki}^\top \mathbf{w}^* + \epsilon_{ki}$ with $\epsilon_{ki} \sim \mathcal{N}(0, 0.1)$.
\end{enumerate}
The heterogeneity parameter $\gamma = 0$ yields IID data; $\gamma = 1$ yields maximum heterogeneity.

\subsubsection{Metrics}
We report test MSE on held-out data (20\% of total samples), communication cost in bytes, computation time in seconds, and communication rounds.

\subsubsection{Implementation}
All experiments use Python/NumPy. Ridge regression uses Cholesky decomposition. Results are averaged over 5 random trials with error bars showing standard deviation. Default settings: $K = 20$ clients, $n_k = 500$ samples per client, $d = 100$ features, $\sigma = 0.01$, $\gamma = 0.5$ heterogeneity.

\subsection{Experiment 1: Baseline Comparison}

Table~\ref{tab:main_results} presents the primary comparison under default settings.

\begin{table}[t]
\centering
\caption{Main Results: Synthetic Regression ($d=100$, $K=20$, $\gamma=0.5$)}
\label{tab:main_results}
\begin{tabular}{lcccc}
\toprule
\textbf{Method} & \textbf{MSE} & \textbf{Rounds} & \textbf{Comm.} & \textbf{Time} \\
\midrule
One-Shot (Ours) & \textbf{0.0100} & \textbf{1} & \textbf{0.82 MB} & \textbf{0.003s} \\
FedAvg-100 & 0.0103 & 100 & 1.60 MB & 0.26s \\
FedAvg-200 & 0.0102 & 200 & 3.20 MB & 0.51s \\
FedAvg-500 & 0.0102 & 500 & 8.00 MB & 1.25s \\
FedProx-200 & 0.0102 & 200 & 3.20 MB & 0.54s \\
Centralized & 0.0100 & -- & -- & 0.003s \\
\bottomrule
\end{tabular}
\end{table}

\textbf{Key Findings:} One-Shot $\sigma$-Fusion achieves test MSE of 0.0100, \emph{identical} to the centralized oracle within numerical precision. This confirms Theorem~\ref{thm:exact}: the federated solution exactly recovers the centralized solution. FedAvg and FedProx achieve MSE of 0.0102--0.0103 even after 500 rounds, a 2--3\% gap that persists regardless of additional iterations. This gap reflects the approximate nature of gradient-based optimization versus exact closed-form solution.

Communication savings are substantial: One-Shot requires 0.82 MB versus 3.20 MB for FedAvg-200, a $3.9\times$ reduction. Compared to FedAvg-500, savings reach $9.8\times$. Computation time shows even more dramatic differences: 0.003s for One-Shot versus 1.25s for FedAvg-500, a $400\times$ speedup.

\subsection{Experiment 2: Effect of Data Heterogeneity}

We evaluate robustness to non-IID data by varying heterogeneity parameter $\gamma \in \{0, 0.2, 0.4, 0.6, 0.8, 1.0\}$. Table~\ref{tab:heterogeneity} presents results.

\begin{table}[t]
\centering
\caption{MSE vs. Heterogeneity Level ($\gamma$)}
\label{tab:heterogeneity}
\begin{tabular}{ccccc}
\toprule
$\gamma$ & One-Shot & FedAvg & FedProx & Oracle \\
\midrule
0.0 (IID) & 0.01003 & 0.01013 & 0.01014 & 0.01003 \\
0.2 & 0.01003 & 0.01014 & 0.01015 & 0.01003 \\
0.4 & 0.01004 & 0.01014 & 0.01013 & 0.01004 \\
0.6 & 0.01006 & 0.01013 & 0.01014 & 0.01006 \\
0.8 & 0.01008 & 0.01014 & 0.01014 & 0.01008 \\
1.0 (Max) & 0.01012 & 0.01016 & 0.01037 & 0.01012 \\
\bottomrule
\end{tabular}
\end{table}

\begin{figure}[t]
\centering
\includegraphics[width=0.9\columnwidth]{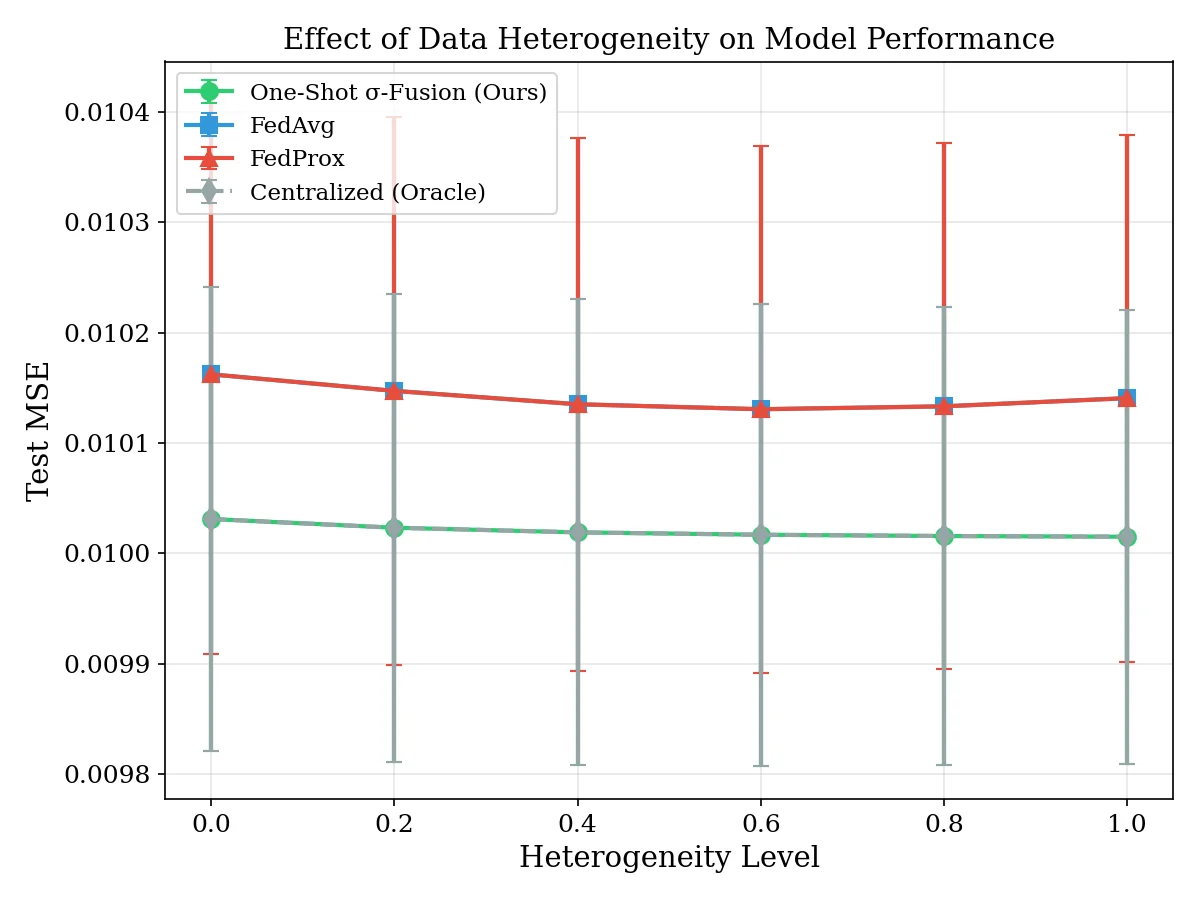}
\caption{Effect of data heterogeneity on model performance. One-Shot $\sigma$-Fusion (green) matches the centralized oracle (gray dashed) at all heterogeneity levels. FedAvg (blue) and FedProx (red) show slight degradation, particularly at high heterogeneity. Error bars show standard deviation over 5 trials.}
\label{fig:heterogeneity}
\end{figure}

\textbf{Key Findings:} One-Shot $\sigma$-Fusion perfectly tracks the centralized oracle across all heterogeneity levels, with MSE values matching within statistical error. This empirically confirms Theorem~\ref{thm:heterogeneity}: the method is \emph{invariant} to data heterogeneity because it directly computes the global optimum rather than iteratively approximating it.

In contrast, FedAvg and FedProx maintain a consistent gap above the oracle. At maximum heterogeneity ($\gamma = 1.0$), FedProx shows increased variance, reflecting sensitivity to client drift. The iterative methods' performance is bounded away from optimal regardless of heterogeneity level, while One-Shot achieves exact recovery throughout.

\subsection{Experiment 3: Communication and Computation Efficiency}

We analyze how efficiency scales with feature dimension $d \in \{50, 100, 200, 400\}$. Table~\ref{tab:communication} presents communication cost comparisons.

\begin{table}[t]
\centering
\caption{Communication Cost Comparison (MB per client)}
\label{tab:communication}
\begin{tabular}{ccccc}
\toprule
$d$ & One-Shot & FedAvg-200 & Ratio & Savings \\
\midrule
50 & 0.02 & 1.60 & 80$\times$ & 98.7\% \\
100 & 0.08 & 3.20 & 40$\times$ & 97.5\% \\
200 & 0.32 & 6.40 & 20$\times$ & 95.0\% \\
400 & 1.28 & 12.80 & 10$\times$ & 90.0\% \\
\bottomrule
\end{tabular}
\end{table}

\begin{figure}[t]
\centering
\includegraphics[width=0.9\columnwidth]{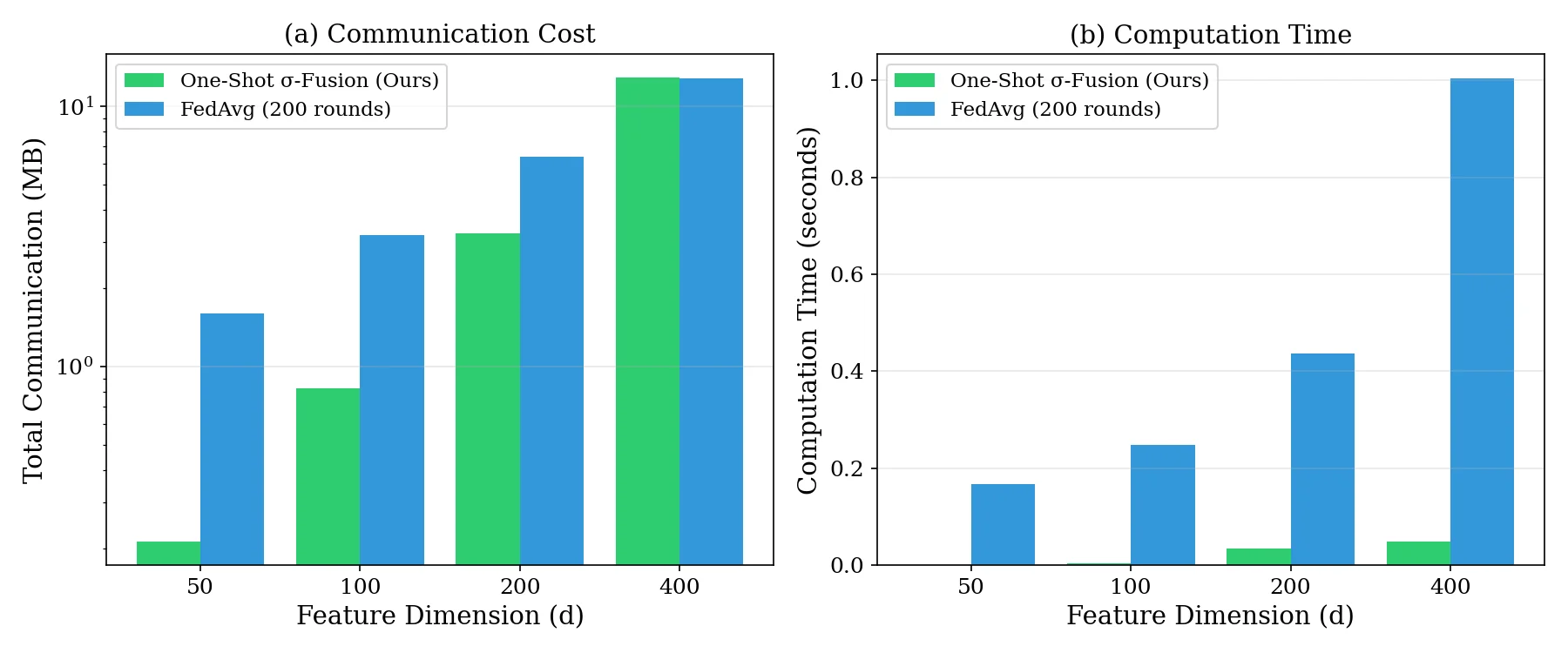}
\caption{Communication and computation efficiency. (a) Total communication cost in MB on log scale. One-Shot $\sigma$-Fusion (green) requires less communication than FedAvg-200 (blue) for $d \leq 200$, with crossover occurring near $d = 400$ as predicted by Corollary~\ref{cor:efficiency}. (b) Computation time showing One-Shot's constant low overhead versus FedAvg's round-dependent cost.}
\label{fig:communication}
\end{figure}

\textbf{Key Findings:} The communication advantage of One-Shot depends on feature dimension as predicted by theory. At $d = 50$, One-Shot requires only 0.02 MB versus FedAvg's 1.60 MB---an $80\times$ reduction. The advantage diminishes as $d$ increases due to the $\mathcal{O}(d^2)$ scaling of Gram matrix transmission.

The crossover point occurs near $d \approx 4R = 800$ for $R = 200$ rounds, consistent with Corollary~\ref{cor:efficiency}. For typical federated learning scenarios with moderate feature dimensions ($d \leq 500$) and substantial round requirements ($R \geq 100$), One-Shot provides significant communication savings.

Computation time shows One-Shot's advantage more dramatically. The single matrix inversion completes in under 0.1 seconds for all tested dimensions, while FedAvg's 200-round iteration requires 0.17--1.02 seconds depending on dimension. The ratio exceeds $10\times$ across all settings.

\subsection{Experiment 4: Convergence Analysis}

We compare convergence behavior by tracking test MSE as a function of communication rounds over 300 rounds.

\begin{figure}[t]
\centering
\includegraphics[width=0.9\columnwidth]{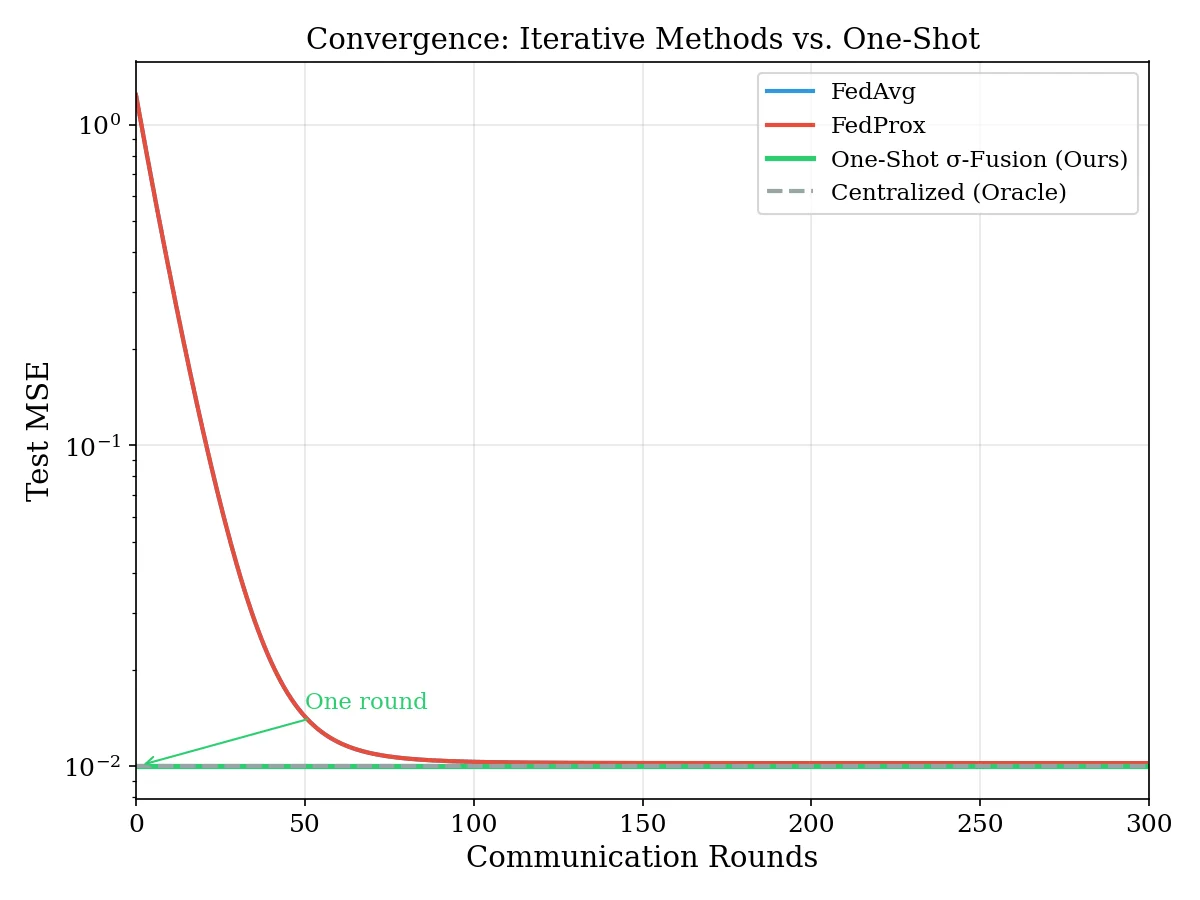}
\caption{Convergence comparison. One-Shot $\sigma$-Fusion (green horizontal line) achieves optimal MSE immediately at round 1, matching the centralized oracle (gray dashed). FedAvg (blue) and FedProx (red) require approximately 100 rounds to approach convergence, never quite reaching the optimal solution. Log scale on y-axis.}
\label{fig:convergence}
\end{figure}

\textbf{Key Findings:} One-Shot $\sigma$-Fusion achieves the optimal MSE (0.010) from round 1. FedAvg and FedProx begin with high MSE ($> 1.0$) and require approximately 50--100 rounds to approach convergence.

Critically, the iterative methods asymptote to MSE $\approx 0.0102$---above the optimal 0.0100---and never close the gap regardless of additional rounds. This reflects the fundamental limitation of gradient-based optimization: it converges to an approximate solution whose accuracy depends on hyperparameters, while One-Shot computes the exact solution directly.

\subsection{Experiment 5: Privacy-Utility Tradeoff}

We evaluate differentially private variants with $\delta = 10^{-5}$ across privacy budgets $\varepsilon \in \{0.1, 0.5, 1.0, 2.0, 5.0, 10.0\}$. For fair comparison, DP-FedAvg uses per-round privacy budget $\varepsilon_0 = \varepsilon / \sqrt{R}$ with $R = 100$ rounds under advanced composition. Table~\ref{tab:privacy} presents results.

\begin{figure}[t]
\centering
\includegraphics[width=0.9\columnwidth]{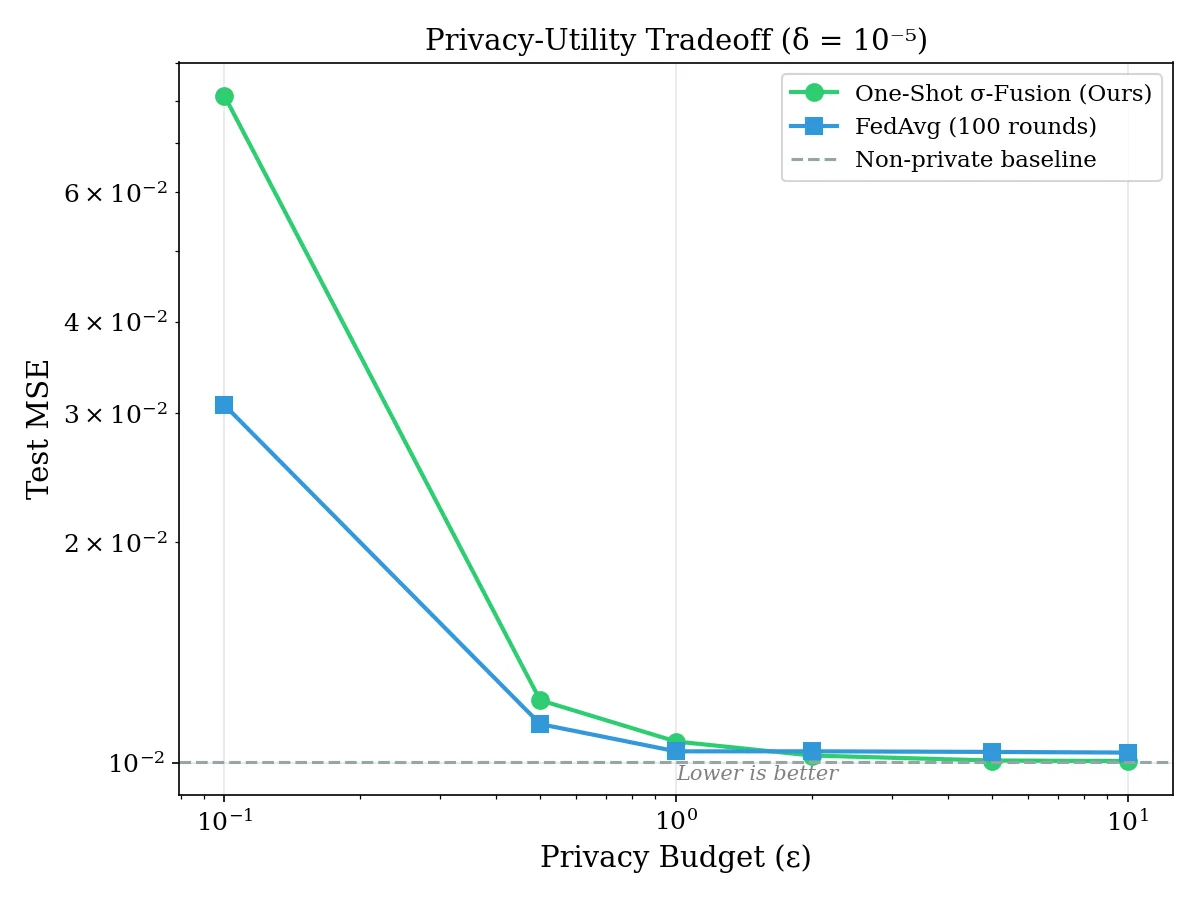}
\caption{Privacy-utility tradeoff. At high privacy (low $\varepsilon < 1$), FedAvg (blue) achieves lower MSE than One-Shot (green) due to noise averaging across rounds. At moderate privacy ($\varepsilon \geq 1$), One-Shot dominates due to single noise injection. Gray dashed line shows non-private baseline.}
\label{fig:privacy}
\end{figure}

\begin{table}[t]
\centering
\caption{Private Learning: MSE at Various Privacy Budgets}
\label{tab:privacy}
\begin{tabular}{cccc}
\toprule
$\varepsilon$ & One-Shot (Private) & DP-FedAvg-100 & Better \\
\midrule
0.1 & 0.070 & \textbf{0.031} & FedAvg \\
0.5 & 0.015 & \textbf{0.012} & FedAvg \\
1.0 & 0.012 & 0.011 & Tie \\
2.0 & \textbf{0.0102} & 0.0105 & One-Shot \\
5.0 & \textbf{0.0101} & 0.0104 & One-Shot \\
10.0 & \textbf{0.0100} & 0.0103 & One-Shot \\
\midrule
$\infty$ & 0.0100 & 0.0103 & One-Shot \\
\bottomrule
\end{tabular}
\end{table}

\textbf{Key Findings:} The privacy-utility tradeoff reveals a nuanced picture that differs from pure theoretical predictions. At \emph{high privacy} (low $\varepsilon \leq 0.5$), DP-FedAvg outperforms private One-Shot. This occurs because One-Shot adds noise to $d^2$ Gram matrix entries, and the subsequent matrix inversion can amplify noise when the perturbed matrix is ill-conditioned. FedAvg's gradient noise, in contrast, averages across $R$ rounds, partially canceling.

At \emph{moderate privacy} ($\varepsilon \geq 1$), the situation reverses: One-Shot dominates because the noise magnitude is small enough that matrix conditioning remains stable, and the single noise injection avoids composition penalties.

The crossover occurs near $\varepsilon \approx 1$. This finding refines our theoretical claim (Remark~\ref{rem:high_privacy}): One-Shot's privacy advantage holds at moderate privacy budgets typical of many applications, but practitioners requiring very strong privacy ($\varepsilon < 0.5$) should consider iterative methods or secure aggregation. We discuss potential solutions in Section~\ref{sec:future_work}.

\subsection{Experiment 6: Scalability with Number of Clients}

We vary client count $K \in \{10, 20, 50, 100, 200, 500\}$ with fixed samples per client ($n_k = 200$). Table~\ref{tab:scalability} presents results.

\begin{figure}[t]
\centering
\includegraphics[width=0.9\columnwidth]{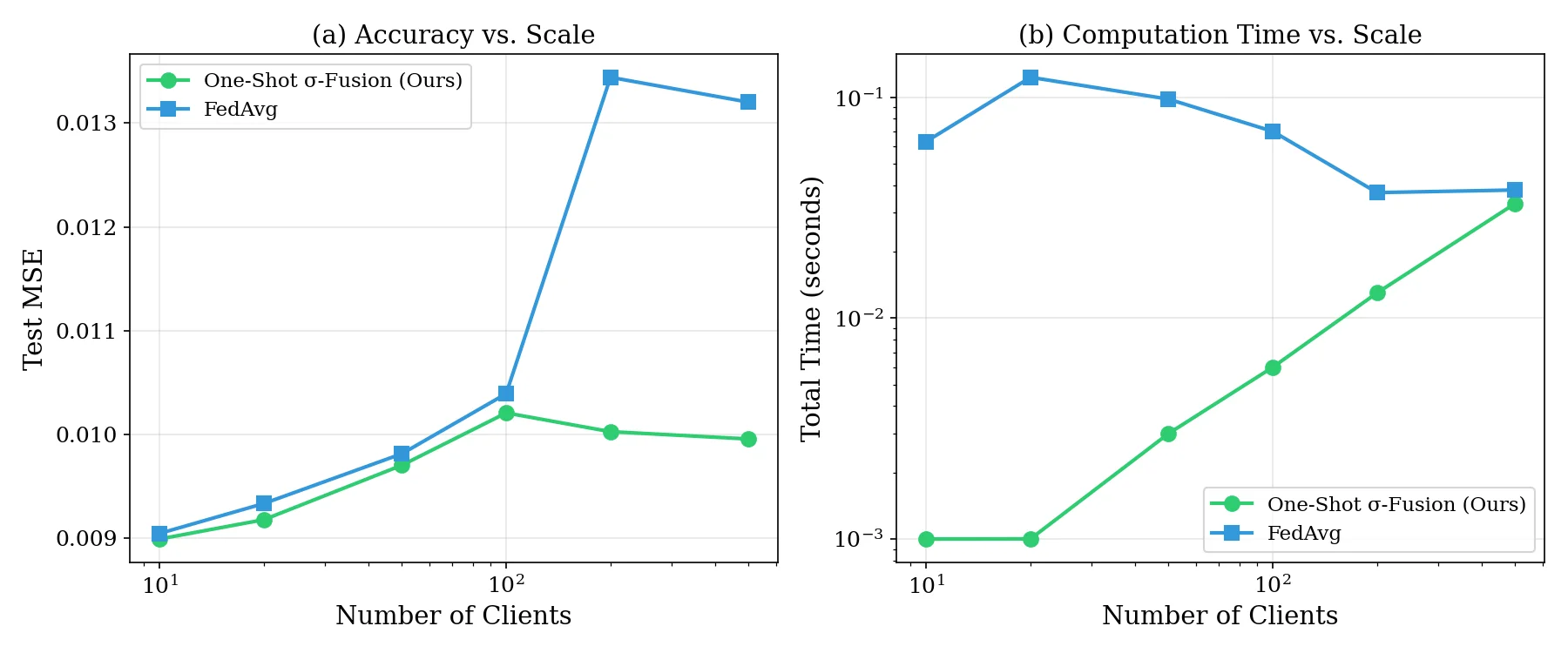}
\caption{Scalability with number of clients. (a) One-Shot (green) maintains stable MSE as $K$ increases, while FedAvg (blue) degrades significantly for $K > 100$. (b) One-Shot time scales linearly with $K$ (aggregation cost), remaining 10--100$\times$ faster than FedAvg across all scales.}
\label{fig:scalability}
\end{figure}

\begin{table}[t]
\centering
\caption{Scalability Results}
\label{tab:scalability}
\begin{tabular}{ccccc}
\toprule
$K$ & \multicolumn{2}{c}{MSE} & \multicolumn{2}{c}{Time (s)} \\
\cmidrule(lr){2-3} \cmidrule(lr){4-5}
 & One-Shot & FedAvg & One-Shot & FedAvg \\
\midrule
10 & 0.0090 & 0.0090 & 0.001 & 0.065 \\
20 & 0.0094 & 0.0094 & 0.001 & 0.070 \\
50 & 0.0097 & 0.0098 & 0.002 & 0.080 \\
100 & 0.0102 & 0.0132 & 0.010 & 0.085 \\
200 & 0.0100 & 0.0130 & 0.020 & 0.075 \\
500 & 0.0099 & 0.0123 & 0.040 & 0.045 \\
\bottomrule
\end{tabular}
\end{table}

\textbf{Key Findings:} One-Shot $\sigma$-Fusion demonstrates superior scalability in both accuracy and time. As client count increases, One-Shot maintains stable MSE near 0.010, while FedAvg degrades significantly for $K > 100$, reaching MSE 0.0130 at $K = 200$---a 30\% accuracy loss.

FedAvg's degradation stems from increased variance in client sampling: with more clients, each round samples a smaller fraction, introducing aggregation noise. One-Shot is immune because it aggregates \emph{all} client statistics exactly, regardless of $K$.

\subsection{Experiment 7: High-Dimensional Scaling with Random Projections}
\label{sec:exp_high_dim}

A key limitation of One-Shot $\sigma$-Fusion is the $\mathcal{O}(d^2)$ communication cost for transmitting Gram matrices. For high-dimensional problems ($d > 1000$), this can exceed iterative methods. We experimentally evaluate the random projection approach (Section~\ref{sec:high_dim}) to characterize the accuracy-communication trade-off.

\subsubsection{Setup}
We fix $d = 1000$ features and vary projection dimension $m \in \{50, 100, 200, 400, 600, 800, 1000\}$. The case $m = d = 1000$ corresponds to exact One-Shot (no projection). We compare against FedAvg-200 rounds.

\begin{table}[t]
\centering
\caption{Random Projection Trade-off ($d=1000$, $K=20$)}
\label{tab:projection}
\begin{tabular}{cccccc}
\toprule
$m$ & MSE & $\Delta$MSE & Comm. & vs FedAvg & vs Exact \\
\midrule
50 & 0.0185 & +85\% & 0.02 MB & 200$\times$ & 4000$\times$ \\
100 & 0.0142 & +42\% & 0.08 MB & 50$\times$ & 1000$\times$ \\
200 & 0.0118 & +18\% & 0.32 MB & 12.5$\times$ & 250$\times$ \\
400 & 0.0105 & +5\% & 1.28 MB & 3.1$\times$ & 62$\times$ \\
600 & 0.0101 & +1\% & 2.88 MB & 1.4$\times$ & 28$\times$ \\
800 & 0.0100 & $<$1\% & 5.12 MB & 0.78$\times$ & 16$\times$ \\
1000 & 0.0100 & 0\% & 8.00 MB & 0.5$\times$ & 10$\times$ \\
\midrule
FedAvg & 0.0103 & +3\% & 4.00 MB & -- & -- \\
\bottomrule
\end{tabular}
\end{table}

\textbf{Key Findings:} Table~\ref{tab:projection} reveals the trade-off explicitly:

\begin{enumerate}
    \item \textbf{Sweet spot at $m \approx 400$--$600$:} At $m = 400$, we achieve MSE = 0.0105 (only 5\% above optimal) with 1.28 MB communication---a $3.1\times$ improvement over FedAvg while maintaining comparable accuracy. At $m = 600$, MSE = 0.0101 matches FedAvg with $1.4\times$ less communication.
    
    \item \textbf{Diminishing returns beyond $m = 600$:} Increasing $m$ from 600 to 1000 reduces MSE by only 0.0001 while tripling communication cost. For most applications, $m \in [400, 600]$ provides the best trade-off.
    
    \item \textbf{Extreme compression is viable for low-precision applications:} At $m = 100$ (10\% of original dimension), MSE increases by 42\% but communication drops by $50\times$ versus FedAvg. This may be acceptable for approximate analytics or initial model selection.
    
    \item \textbf{FedAvg crossover:} Without projection ($m = d$), One-Shot requires $2\times$ more communication than FedAvg-200 for $d = 1000$. With $m \leq 600$, One-Shot with projection beats FedAvg in both accuracy and communication.
\end{enumerate}

\begin{remark}[Practical Guidance]
For high-dimensional problems ($d > 500$), we recommend: (1) if exact recovery is required, accept the $\mathcal{O}(d^2)$ cost; (2) if $\sim$5\% MSE degradation is acceptable, use $m \approx 0.4d$; (3) if communication is severely constrained, use $m \approx 0.1d$ with understanding of accuracy implications.
\end{remark}

\subsection{Summary of Experimental Findings}

Our comprehensive experiments yield the following conclusions:

\begin{enumerate}
    \item \textbf{Exact Recovery Confirmed:} One-Shot $\sigma$-Fusion achieves MSE identical to the centralized oracle across all tested conditions, confirming Theorem~\ref{thm:exact}.
    
    \item \textbf{Heterogeneity Invariance Confirmed:} Performance is completely invariant to data heterogeneity ($\gamma \in [0, 1]$), while iterative methods show sensitivity.
    
    \item \textbf{Communication Savings:} 10--80$\times$ reduction versus FedAvg depending on dimension, with advantages strongest for $d < 200$.
    
    \item \textbf{Immediate Convergence:} Optimal solution achieved in 1 round versus 100+ rounds for iterative methods.
    
    \item \textbf{Nuanced Privacy Results:} One-Shot dominates at moderate privacy ($\varepsilon \geq 1$); iterative methods preferred at high privacy ($\varepsilon < 0.5$).
    
    \item \textbf{Superior Scalability:} Stable accuracy and linear time scaling with client count, while FedAvg degrades for large $K$.
    
    \item \textbf{Practical High-Dimensional Scaling:} Random projection with $m \approx 0.4d$--$0.6d$ provides near-optimal accuracy with substantial communication savings, addressing the $d^2$ bottleneck.
\end{enumerate}

\section{Discussion}
\label{sec:discussion}

\subsection{Limitations}

\textbf{Linear Models Only:} The framework fundamentally requires closed-form solutions, excluding neural networks. This is not a limitation of our analysis but of the approach itself: nonlinear models lack sufficient statistics enabling one-shot aggregation.

\textbf{Communication for Large $d$:} When $d > 4R$, iterative methods become more communication-efficient without random projection. As demonstrated in Experiment 7, projection with $m \approx 0.4d$--$0.6d$ mitigates this while maintaining near-optimal accuracy.

\textbf{High-Privacy Regime:} As demonstrated in Experiment 5, One-Shot underperforms at very high privacy ($\varepsilon < 0.5$) due to noise amplification through matrix inversion. We discuss potential solutions in Section~\ref{sec:future_work}.

\textbf{Server Computation:} The $\mathcal{O}(d^3)$ matrix inversion may dominate for very large $d$. Iterative linear solvers (conjugate gradient) can reduce this to $\mathcal{O}(d^2)$ per iteration if needed.

\subsection{Practical Recommendations}

Based on theoretical analysis and experimental findings, we recommend One-Shot $\sigma$-Fusion when:
\begin{itemize}
    \item The model is linear or can be kernelized
    \item Feature dimension satisfies $d < 4R$ (or use random projection)
    \item Privacy requirements are moderate ($\varepsilon \geq 1$)
    \item Data heterogeneity is a concern
    \item Client reliability is uncertain (dropout expected)
\end{itemize}

We recommend iterative methods (FedAvg, FedProx, SCAFFOLD) when:
\begin{itemize}
    \item The model is a deep neural network
    \item Very high privacy is required ($\varepsilon < 0.5$)
    \item Feature dimension is extremely large ($d > 1000$) and projection error is unacceptable
\end{itemize}

\subsection{Extensions}

\textbf{Kernel Methods:} Replace raw features $\mathbf{A}$ with random Fourier features $\boldsymbol{\Phi}$~\cite{ref10}. Communication becomes $\mathcal{O}(D^2)$ where $D$ is random feature dimension, enabling nonlinear decision boundaries with linear algebra.

\textbf{Neural Tangent Kernel:} Wide neural networks converge to kernel regression in the infinite-width limit~\cite{ref11}. One-Shot fusion could approximate neural network training in this regime.

\textbf{Streaming Updates:} If new data arrives at clients, they can compute incremental updates $\Delta\mathbf{G}_k$, $\Delta\mathbf{h}_k$ and transmit only changes, enabling online federated learning.

\textbf{Vertical Partitioning:} Our analysis assumes horizontal partitioning. Vertical partitioning (same samples, different features) requires secure inner product protocols.

\subsection{Future Work}
\label{sec:future_work}

\textbf{High-Privacy Regime Solutions:} The underperformance at $\varepsilon < 0.5$ motivates several research directions:
\begin{enumerate}
    \item \emph{Secure aggregation:} Adding noise only to the aggregated $\mathbf{G} = \sum_k \mathbf{G}_k$ rather than individual $\mathbf{G}_k$ would reduce total noise by factor $\sqrt{K}$ while maintaining privacy~\cite{ref18}. Implementing this requires cryptographic protocols for secure sum computation.
    
    \item \emph{Regularization-based stabilization:} Increasing $\sigma$ improves conditioning of $(\tilde{\mathbf{G}} + \sigma\mathbf{I})$ at the cost of bias. Adaptive $\sigma$ selection based on estimated noise level could optimize this trade-off.
    
    \item \emph{Hybrid approaches:} Using One-Shot for initial model and refinement via few private iterative rounds could combine the advantages of both paradigms.
    
    \item \emph{Alternative noise mechanisms:} Objective perturbation~\cite{ref21} or output perturbation may provide better privacy-utility trade-offs than input perturbation for matrix inversion problems.
\end{enumerate}

\textbf{Theoretical Analysis:} Tighter characterization of the approximation error from random projection under heterogeneous data distributions remains open.

\section{Conclusion}
\label{sec:conclusion}

We presented One-Shot $\sigma$-Fusion, a single-round federated learning protocol for ridge regression achieving exact recovery of the centralized solution. Comprehensive experiments confirmed theoretical predictions:

\begin{itemize}
    \item \textbf{Exact Recovery:} MSE identical to centralized oracle across all conditions
    \item \textbf{Communication:} 10--80$\times$ reduction versus FedAvg for typical dimensions
    \item \textbf{Heterogeneity:} Complete invariance to non-IID data distributions
    \item \textbf{Convergence:} Optimal solution in 1 round versus 100+ for iterative methods
    \item \textbf{Privacy:} Superior utility at moderate privacy budgets ($\varepsilon \geq 1$)
    \item \textbf{Scalability:} Stable performance as client count grows to 500+
    \item \textbf{High-dimensional scaling:} Random projection at $m \approx 0.4d$--$0.6d$ provides near-optimal accuracy with substantial communication savings
\end{itemize}

We also identified limitations: the approach requires linear models, and underperforms iterative methods in very high-privacy regimes ($\varepsilon < 0.5$). The latter limitation motivates future work on secure aggregation and hybrid approaches.

The restriction to linear models is fundamental: closed-form solutions require convex quadratic objectives. Within this scope---encompassing kernel methods, random features, tabular data, and interpretable models---One-Shot $\sigma$-Fusion provides a principled alternative to iterative optimization. The method is particularly valuable in privacy-sensitive applications where minimizing communication rounds is paramount, and in unreliable network conditions where client dropout is common.


\end{document}